\documentclass[10pt]{article} 
\usepackage[accepted]{tmlr}

\usepackage{amsmath,amsfonts,bm}

\def\eqref#1{equation~\ref{#1}}

\def\1{\bm{1}}

\DeclareMathAlphabet{\mathsfit}{\encodingdefault}{\sfdefault}{m}{sl}
\SetMathAlphabet{\mathsfit}{bold}{\encodingdefault}{\sfdefault}{bx}{n}

\usepackage{hyperref}
\usepackage{url}
\usepackage{booktabs}
\usepackage{xcolor}
\usepackage{multirow}
\usepackage{multicol}
\usepackage{amsmath}
\usepackage{amssymb}
\usepackage{amsthm} 
\usepackage{graphicx}
\usepackage{wrapfig}
\usepackage{subcaption}
\usepackage{cleveref}
\usepackage{subcaption}

\newtheorem{theorem}{Theorem}

\newtheorem{corollary}{Corollary}

\title{From Arithmetic to Logic: The Resilience of Logic and Lookup-Based Neural Networks Under Parameter Bit-Flips}

\author{\name Alan T. L. Bacellar \email alanbacellar@utexas.edu \\
\name Sathvik Chemudupati \email sathvikc@utexas.edu  \\
\name Shashank Nag \email shashanknag@utexas.edu \\
\name Allison Seigler \email aseigler@utexas.edu \\
\addr The University of Texas at Austin
\AND
\name Priscila M. V. Lima \email priscilamvl@cos.ufrj.br \\
\addr Federal University of Rio de Janeiro
\AND
\name Felipe M. G. França \email felipe@ieee.org \\
\addr Instituto de Telecomunicações, Porto, Portugal (now at Google LLC)
\AND
\name Lizy K. John \email ljohn@ece.utexas.edu \\
\addr The University of Texas at Austin
}

\def\month{07}  
\def\year{2026} 
\def\openreview{\url{https://openreview.net/forum?id=ZZYvGZei5h}} 

\begin{document}

\maketitle

\begin{abstract}
The deployment of Deep Neural Networks (DNNs) in safety-critical edge environments necessitates robustness against hardware-induced bit-flip errors. While empirical studies indicate that reducing Numerical Precision can improve fault tolerance, the theoretical basis of this phenomenon remains underexplored. In this work, we study resilience as a Structural Property of neural architectures rather than solely as a property of a dataset-specific trained solution. By deriving the Expected Squared Error (MSE) under independent parameter bit flips across multiple numerical formats and layer primitives, we show that lower precision, higher sparsity, bounded activations, and shallow depth are consistently favored under this corruption model. We then argue that Logic and Lookup-Based Neural Networks realize the joint limit of these design trends. Through ablation studies on the MLPerf Tiny benchmark suite, we show that the observed empirical trends are consistent with the theoretical predictions, and that LUT-based models remain highly stable in corruption regimes where standard floating-point models fail sharply. Furthermore, we identify a novel Even-Layer Recovery effect unique to logic-based architectures and analyze the structural conditions under which it emerges. Overall, our results suggest that shifting from continuous arithmetic weights to discrete Boolean lookups can provide a favorable Accuracy--Resilience trade-off for hardware fault tolerance.
\end{abstract}

\section{Introduction}

The ubiquitous deployment of Deep Neural Networks (DNNs) in safety-critical edge environments---from autonomous vehicles to implantable medical devices---has exposed a critical vulnerability: hardware reliability. These edge environments are often hostile, characterized by ionizing radiation, thermal fluctuations, and aggressive voltage scaling, all of which compromise the physical integrity of memory and logic \citep{hochschild2021cores, wang2023understanding}. While server-grade systems can rely on expensive Reliability, Availability, and Serviceability (RAS) mechanisms like Chipkill ECC to mask these faults \citep{mukherjee2011architecture}, power-constrained edge accelerators often lack the energy and silicon budget for such redundancy. Consequently, the burden of resilience has shifted from the underlying hardware to the neural architecture itself.

Historically, intuition grounded in classical information theory suggested that high-precision formats, such as FP32, offered the greatest robustness due to their wider dynamic range and representational redundancy. However, a growing body of empirical evidence has dismantled this assumption. Recent studies indicate that reducing numerical precision paradoxically improves fault tolerance, with Binary Neural Networks (BNNs) sustaining accuracy under error rates that render full-precision models useless \citep{ghavami2024zobnn}. This phenomenon suggests that the discretization of the parameter space does not merely compress the model but acts as a powerful structural regularizer against noise.

Prior attempts to explain this phenomenon have typically relied on analyzing the loss landscape, arguing that quantized models converge to ``flatter'' minima 
\citep{baldi2025loss}, or have depended on purely empirical fault injection campaigns on specific datasets. However, these explanations are inherently dataset-dependent; a flat minimum found for one dataset does not guarantee robustness on another. Such analyses characterize the \textit{trained state} of the model rather than the \textit{structural resiliency} of the neural architecture itself.


In this work, we posit that these observations are not isolated anomalies but reflect a common structural trend in bit-flip resilience. We identify four architectural factors that consistently improve resilience under our corruption model: lower precision, higher sparsity, bounded activations, and shallower compositions. We further argue that Logic and Lookup-Based Neural Networks (LUT-NNs), also known as Weightless Neural Networks (WNNs) \citep{wisard, wnn_intro_esann, dwn}, realize an extreme point in this design space. By replacing numerical weights with discrete truth tables, enforcing localized sparse connectivity, and using inherently bounded Boolean outputs, these models exhibit a qualitatively different fault-propagation mechanism from standard weighted networks.

To study this thesis, we derive a formal, dataset-agnostic framework for the expected squared output error induced by parameter bit flips. Our analysis isolates the role of numerical representation and architectural structure, and yields explicit neuron- and layer-level error expressions for several common formats and operations. Within this framework, floating-point representations admit rare but potentially very large multiplicative distortions through exponent corruption, whereas lookup-table neurons localize the effect of a parameter fault to a bounded subset of input addresses.

We present a theoretical and empirical study of this resilience landscape. Our main contributions are:
\begin{enumerate}
    \item \textbf{Neuron- and layer-level error analysis:} We derive expected output error expressions under independent parameter bit flips for integer, floating-point, quantized, binary, and LUT-based neuron models.
    \item \textbf{Structural resilience trends:} We show, within this analytical framework, how lower precision, bounded activations, sparsity, and shallow compositions improve resilience under bit-flip corruption.
    \item \textbf{Ablation-driven validation:} Through experiments on the MLPerf Tiny benchmark suite \citep{banbury2021mlperf}, we isolate the impact of width, depth, precision, activation, and sparsity, and show that the empirical trends are consistent with the theory.
    \item \textbf{Logic/LUT robustness:} We compare LUT-based architectures against weighted baselines under severe corruption and show that Logic/LUT models remain stable substantially deeper into the fault regime.
    \item \textbf{Symmetric recovery:} We analyze a distinctive Even-Layer Recovery effect that emerges in logic-based networks at extreme fault rates.
\end{enumerate}

\section{Related Work}

\textbf{Hardware Faults, Silent Data Corruptions, and Attacks.} 
The susceptibility of Deep Neural Networks to hardware-induced faults has been extensively documented, prompting fundamental analyses of DNN resiliency and the development of analytical frameworks to track uncertainty propagation through network layers \citep{jungmann2024analytical}. However, these analytical models traditionally focus on continuous-valued parameter perturbations and assume infinitesimal errors, limiting their ability to model the large-magnitude, discrete disruptions caused by hardware bit-flips. To quantify these vulnerabilities at an industrial scale, the Parameter Vulnerability Factor (PVF) metric was proposed to evaluate the impact of silent data corruptions (SDCs) on production models \citep{jiao2024pvf}. Beyond natural faults, attackers leverage techniques like Progressive Bit Search to crush network accuracy through targeted malicious bit-flips \citep{rakin2019bit}, a vulnerability that remains a critical threat to modern Large Language Models \citep{chen2024bitflip}. Recent discoveries reveal that these attack surfaces extend directly into compiled DNN executables \citep{chen2025compiled}, requiring specialized defense frameworks like BITSHIELD to secure compiled code \citep{chen2025bitshield}. To mitigate errors at the model level, researchers have proposed defending through weight reconstruction \citep{li2020defending}, exploiting bit-level redundancy \citep{catalan2024exploiting}, and enforcing bit homogeneity (MONO) within parameters \citep{eslami2024mono}.

\textbf{Quantization, Resilient Training, and Vulnerability.} 
Reducing numerical precision—a foundational technique for efficient inference \citep{jacob2018quantization, hubara2017quantized, qin2020hardwarefriendly, pouransari2020least, zahran2025jailbreaking}—has demonstrated empirical robustness benefits. These observations are supported by loss landscape analyses designed for reliable quantized models \citep{baldi2025loss}, motivating the development of inherently resilient Binary Neural Networks (BNNs) \citep{xu2023resilient} and methods for learning fault-resistant quantization ranges \citep{chitsaz2023training}. While \citet{sen2020empir} show that mixed-precision ensembles improve robustness against input-space adversarial attacks, their focus remains on input perturbations rather than memory corruption or BER-controlled bit-flips. Despite these benefits, quantized models remain vulnerable, necessitating formal verification of bit-flip attacks \citep{lin2025verification}, investigations into BNN adversarial weaknesses \citep{kundu2024bitbybit}, and defenses against scale-factor manipulation \citep{wang2025your}. Complementarily, \citet{chen2025qresafe} demonstrate that lower-bit PTQ/QAT can degrade LLM alignment safety, yet their work examines prompt-induced degradation rather than the weight-level memory corruption we study.

\textbf{Logic and LUT-Based Architectures.} 
Moving beyond traditional arithmetic representations to mitigate these propagating errors, Logic and LUT-Based Neural Networks synthesize neural operations into pure Boolean logic and discrete memory lookups. One approach compiles sparse, quantized neurons directly into lookup tables for extreme-throughput applications, as exemplified by LogicNets \citep{umuroglu2020logicnets} and FPGA implementations like NeuraLUT \citep{andronic2024neuralut}. Recent advancements have expanded this paradigm through DiffLogicNets, enabling the fully differentiable training of logic gates \citep{petersen2022difflogicnets}. A closely related approach relies natively on discrete memory lookups (RAM nodes) rather than compiling trained arithmetic weights into logic \citep{aleksander2009brief}. Instead of learning arithmetic parameters, these models directly learn the Boolean contents of the memory addresses. This purely discrete architecture has recently seen a resurgence for resource-constrained environments. Crucially, the DWN architecture advanced this paradigm by enabling the direct, gradient-based training of these discrete LUT entries \citep{dwn}, providing a highly efficient alternative to standard backpropagation. Building on these discrete LUT-based concepts, hybrid approaches have demonstrated the practical viability of combining RAM-nodes with standard arithmetic operations for edge inference \citep{jadhao2024hybrid}, while LL-ViT has successfully adapted these logic and lookup-based principles to state-of-the-art Vision Transformers \citep{nag2025llvit}. While these works highlight the immense computational efficiency of logic and memory lookups, our work uniquely bridges the gap by formalizing their structural capacity for extreme fault resilience.

\section{Background}

\subsection{Logic and Lookup-Based Neural Networks}

Recent advances in neural network design for ultra-low-power edge environments have proposed replacing traditional arithmetic multiply-accumulate (MAC) operations with pure Boolean logic. Early approaches in this domain, such as Differentiable Logic Gate Networks (DiffLogicNets) \citep{petersen2022difflogicnets}, construct multi-layer networks entirely from randomly wired binary logic gates (e.g., AND, OR, XOR). 

However, pure logic gate networks represent a constrained subset of a much broader architectural space: Lookup Table (LUT) based models. A LUT is essentially a discrete memory array that maps an input address to an output value. A simple LUT with 2 binary inputs contains $2^2 = 4$ memory bits and can theoretically represent all 16 possible 2-input Boolean logic gates. Consequently, LUT-based architectures naturally generalize pure logic networks. By expanding the fan-in to $K$ inputs, a single LUT can express any of the $2^{2^K}$ possible Boolean functions for its input variables. This exponentially expands the hypothesis space and expressivity of the neural node while completely bypassing arithmetic operations during inference. Beyond their theoretical expressivity, these architectures provide highly motivated practical advantages; they have demonstrated substantial improvements in inference efficiency over standard Binary Neural Networks (BNNs), achieving up to a $2000\times$ reduction in area-delay product \citep{dwn}. 

\subsection{Differentiable Weightless Neural Networks (DWNs)}

In this work, we utilize Differentiable Weightless Neural Networks (DWNs) \citep{dwn} as the representative architecture for logic and LUT-based models. DWNs fully leverage the generalization power of multi-input LUTs and enable end-to-end gradient-based optimization without relying on arithmetic proxy weights during inference.

\textbf{Architectural Structure.} The DWN architecture replaces numerical weight matrices with layers of trainable LUTs. Figure \ref{fig:dwn_lut_architecture} shows an example of a simple 2-layer DWN model.
\begin{itemize}
    \item \textbf{Input Mapping:} A DWN layer receives a binary input vector $X$. Each LUT in the layer is assigned $K$ inputs (the fan-in). These $K$ inputs are sampled from $X$ using a fixed, pseudo-random connectivity matrix.
    \item \textbf{Lookup Operation:} The $K$ sampled binary values are concatenated to form an integer address $a \in \{0, \dots, 2^K-1\}$. The LUT performs a direct memory access, returning the binary value stored at index $a$ in its internal truth table $T \in \{0,1\}^{2^K}$.
    \item \textbf{Connectivity:} The binary outputs of the LUTs in layer $l$ are sparsely connected to form the inputs for layer $l+1$. 
    \item \textbf{Classification Head:} In the final layer, the network produces continuous logits by aggregating the binary outputs of the terminal LUTs. This is typically done using a simple, hardware-friendly popcount (summation) operation, counting the total number of ``votes'' for each specific class.
\end{itemize}

\begin{figure}[h]
    \centering
    \includegraphics[width=0.4\textwidth]{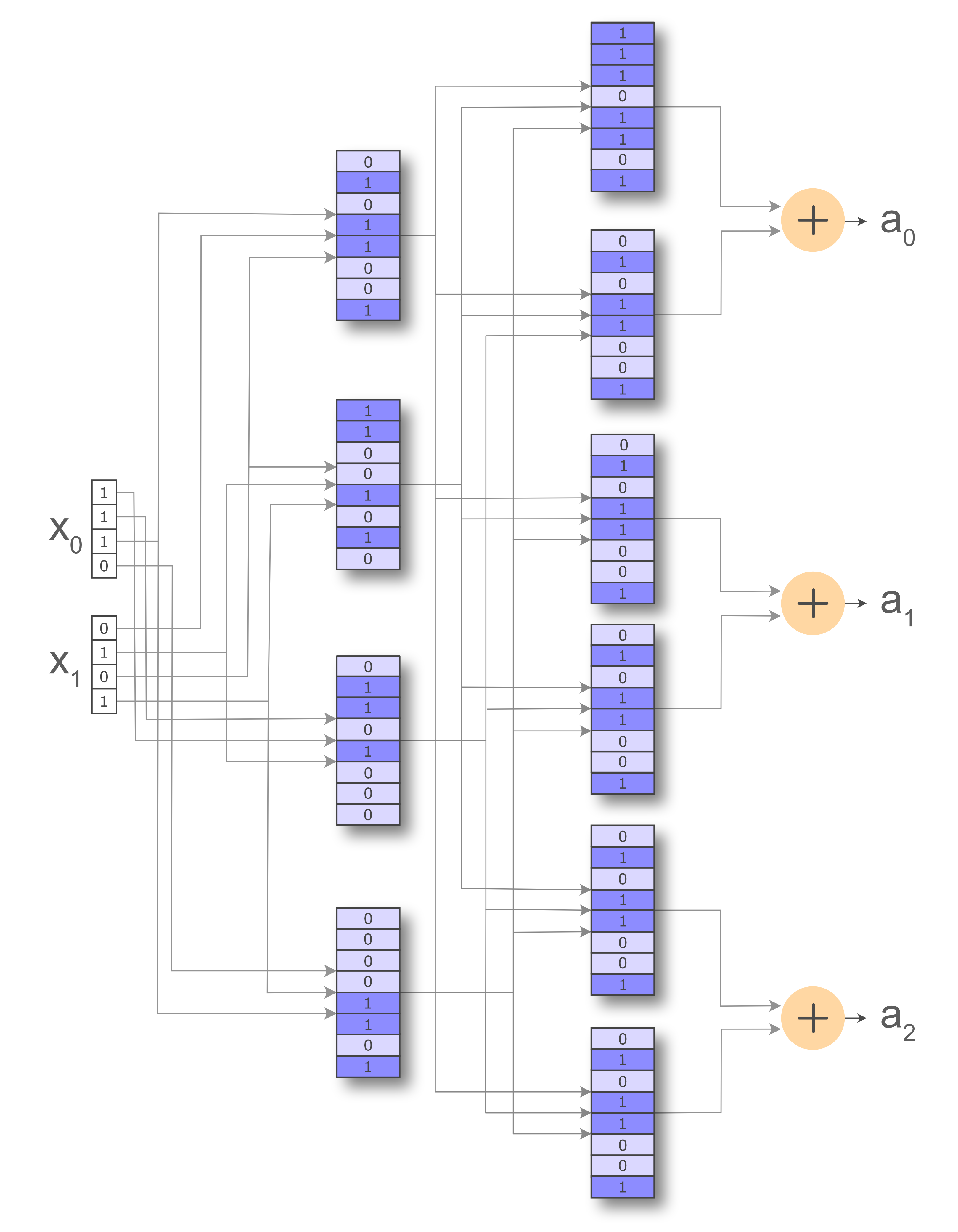}
    \caption{Illustration of a 2-layer neural network composed entirely of layers of lookup tables with binary values. The outputs of each layer address the LUTs of the next layer, culminating in a popcount for class activations.}
    \label{fig:dwn_lut_architecture}
\end{figure}



\textbf{Differentiable Training on Discrete Memory.} 
Because the lookup operation is a discrete addressing step rather than a continuous mathematical function, it naturally blocks the backpropagation of gradients. While older models relied on one-layer models \citep{wisard, uleen, soonfilter}, DWNs overcome this limitation by introducing the Extended Finite Difference (EFD) method. During the backward pass, EFD estimates the gradient by analytically approximating how a discrete bit-flip in a specific LUT memory entry $T[a]$ would influence the continuous loss function at the output. 

This formulation allows the contents of the discrete truth tables to act as the direct, learnable parameters of the network. By treating the LUT entry itself as the parameter optimized by stochastic gradient descent, DWNs align the training objective directly with the resilient, logic-based substrate. This makes DWNs an ideal candidate for our analysis, as they represent the convergence of minimal precision (1-bit storage), hard saturation (Boolean outputs), and high sparsity (fixed $K$-input fan-in).

\paragraph{Expressivity, efficiency, and scalability motivation.}
Logic and lookup-based networks are not merely small special-purpose classifiers. A $K$-input LUT can represent any Boolean function over its local input variables, and compositions of many LUTs form a high-dimensional discrete hypothesis space. Prior work on VC-dimension analysis shows that 1-layer LUT-based models can have substantial capacity and scalable expressivity \citep{vc_wi}. This suggests that, at least over binary or discretized representations, LUT networks can serve as a general function-approximation substrate rather than only as hand-crafted logic. Recent logic/LUT-based systems further show that this substrate can deliver extremely low-energy, low-latency, and high-throughput inference, especially for MLP-style implementations or hybrid architectures that combine LUT/RAM nodes with weighted components \citep{umuroglu2020logicnets,petersen2022difflogicnets,andronic2024neuralut,dwn,jadhao2024hybrid,conv_diff,nag2025llvit,assemble,amigo,tree_lut,sparse_lut,reduced_lut,kan_lut,lightlogic,warplogic,dwn_controller}. However, current results do not yet show that large CNNs or Transformers can be replaced end-to-end by purely LUT-based models without additional architectural advances. Our work adds a complementary motivation for studying this direction: logic and lookup-based models are attractive not only because they can yield extreme inference efficiency, but also because they provide extreme structural resilience to parameter bit flips. This motivates deeper research into scaling these models toward fully convolutional, transformer, and foundation-model settings.

\section{Theoretical Development}

In this section, we derive the expected squared error induced by random bit-flip errors in the memory cells storing neural network weights and parameters. Rather than attempting to characterize resilience only through a particular trained model or dataset, we isolate the structural sources of corruption sensitivity: numerical representation, fan-in, sparsity, activation boundedness, and depth. The resulting expressions are local in the sense that they describe the corruption-induced output drift of individual numerical formats, neurons, or layer primitives. These local quantities are nevertheless the building blocks of network-level resilience. We therefore use a modular analysis: first deriving how each architectural factor affects expected output drift, and then studying how these errors compose through width, sparsity, nonlinearities, and depth. In particular, the depth model in Section~\ref{sec:depth} provides a simple composition mechanism showing when layer-level corruption errors attenuate, accumulate approximately linearly, or amplify across the full network.

\subsection{Definitions and Noise Model}
Let a DNN neuron pre-activation computation be defined as $y = \sum_{i=1}^n w_i x_i$, where $x \in \mathbb{R}^n$ is the input vector and $w \in \mathbb{R}^n$ is the weight vector. We assume a Bit Error Rate (BER) $p$, where each bit in the binary representation of the parameters flips independently with probability $p$ (i.e., independent Bernoulli trials). Let $w'$ denote the corrupted weight and $y'$ the corrupted output. The objective is to find the expected squared error (MSE), decomposed into bias and variance:
\begin{equation}
    E[(y' - y)^2] = \text{Var}(y') + (E[y'] - y)^2
\end{equation}

\paragraph{Scope of the independent-bit model.}
The independent Bernoulli model should be read as a first-order abstraction of random silent data corruption rather than a complete hardware fault model. It is most appropriate for isolated memory upsets, or for systems where physical layout, interleaving, or protection mechanisms decorrelate faults across logical parameter bits. Real hardware can also produce correlated faults, such as stuck-at cells, word-line failures, burst errors, or multi-bit upsets. In those cases, the closed-form expressions below would acquire covariance terms and the probability of shared layer-wide distortion can be larger than predicted by independent flips. The qualitative structural implications remain the same when faults are local and bounded---lower precision, sparsity, bounded activations, and LUT-localized addressing reduce the number and magnitude of affected computations---but our numerical MSE expressions should not be interpreted as exact predictions for strongly correlated hardware failures. Evaluating such structured fault models is an important extension.

\subsection{Weight Representations}

\subsubsection{Integer Weights}
We first consider weights represented as $B$-bit integers in Two's Complement format. A weight $w$ is given by $w = -b_{B-1} 2^{B-1} + \sum_{k=0}^{B-2} b_k 2^k$.

\begin{theorem}[Expected Error for Integer Weights]
For a neuron with fan-in $n$, inputs $x$, and $B$-bit integer weights $w$ subject to independent bit flips with probability $p$, the expected squared error is:
\begin{equation}
    E[(y' - y)^2] = \underbrace{p(1-p) \left( \frac{4^B - 1}{3} \right) \|x\|_2^2}_{\text{Variance Term}} + \underbrace{p^2 \left( \sum_{i=1}^n x_i (1 + 2w_i) \right)^2}_{\text{Bias Term}}
\end{equation}
\end{theorem}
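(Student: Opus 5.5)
The plan is to reduce everything to a single weight, compute the mean and variance of its perturbation bit by bit, and then lift to the neuron output by linearity and independence.

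\textbf{Step 1: perturbation of one weight.} Fix a weight $w_i$ with bits $b_{B-1},\dots,b_0$. Let $F_k\sim\mathrm{Bernoulli}(p)$ be the independent flip indicators. Flipping bit $k$ replaces $b_k$ by $1-b_k$. For $k<B-1$ this changes $w_i$ by $(1-2b_k)2^k$. For the sign bit it changes $w_i$ by $-(1-2b_{B-1})2^{B-1}$, because of the negative weight in two's complement. Hence
\[
w_i'-w_i=\sum_{k=0}^{B-1} F_k\, c_k,
\]
where $c_k=\pm(1-2b_k)2^k$ is a deterministic constant with $|c_k|=2^k$.

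\textbf{Step 2: mean and variance of one weight.} The mean is $E[w_i'-w_i]=p\sum_k c_k$. To evaluate $\sum_k c_k$, I would split it into the $b$-independent part and the $b$-dependent part. The $b$-independent part is $\sum_{k<B-1}2^k-2^{B-1}=-1$. The $b$-dependent part is $-2\bigl(\sum_{k<B-1}b_k2^k-b_{B-1}2^{B-1}\bigr)=-2w_i$. Therefore $E[w_i'-w_i]=-p(1+2w_i)$. This telescoping identity is the one step that needs care: the sign bit enters with the opposite sign, and tracking that is what makes the clean $1+2w_i$ appear. For the variance, the $F_k$ are independent, so
\[
\mathrm{Var}(w_i')=\sum_k c_k^2\,p(1-p)=p(1-p)\sum_{k=0}^{B-1}4^k=p(1-p)\frac{4^B-1}{3}.
\]

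\textbf{Step 3: lift to the neuron.} We have $y'-y=\sum_i x_i(w_i'-w_i)$ with $x$ fixed. Flips in different weights are independent, so the variances add, with no covariance terms:
\[
\mathrm{Var}(y')=\sum_i x_i^2\,\mathrm{Var}(w_i')=p(1-p)\frac{4^B-1}{3}\|x\|_2^2 .
\]
By linearity of expectation, $E[y']-y=-p\sum_i x_i(1+2w_i)$; squaring gives the bias term. Substituting both into the bias--variance decomposition of the expected squared error from the noise-model subsection yields the stated formula.

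The main obstacle is only the sign-bit bookkeeping in Step 2. Everything else follows directly from independence and linearity.
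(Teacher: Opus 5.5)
Your proposal is correct and follows essentially the same route as the paper. Your coefficients $c_k$ are the paper's $(1-2b_k)V_k$ with signed place values $V_k$, and your split of $\sum_k c_k$ into $-1$ and $-2w_i$ is the paper's use of the two's-complement identity $\sum_k V_k=-1$; the variance sum, the lift to the neuron via independence across weights, and the bias--variance decomposition are identical to the paper's.
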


\begin{proof}
Let $\epsilon_k$ be the error introduced by a flip in bit $k$. The total weight error is $\Delta w = \sum_{k=0}^{B-1} \epsilon_k$.
The expectation of the error for bit $k$ with value $b_k$ and place value $V_k$ is $E[\epsilon_k] = p(1-2b_k)V_k$. Summing over all bits, the mean weight error is:
\begin{equation}
    E[\Delta w] = p \sum_{k=0}^{B-1} (1-2b_k)V_k = p \left( \sum V_k - 2w \right) = p(-1 - 2w)
\end{equation}
where we use the identity $\sum_{k=0}^{B-1} V_k = -1$ for Two's Complement.

The variance of the error is the sum of independent bit variances. For a single bit, $\text{Var}(\epsilon_k) = p(1-p)V_k^2$. Summing over $k$:
\begin{equation}
    \text{Var}(\Delta w) = p(1-p) \sum_{k=0}^{B-1} V_k^2 = p(1-p) \frac{4^B - 1}{3}
\end{equation}
The output error $\Delta y = \sum x_i \Delta w_i$ is a sum of independent variables. Thus, $\text{Var}(\Delta y) = \sum x_i^2 \text{Var}(\Delta w_i)$ and $E[\Delta y] = \sum x_i E[\Delta w_i]$. Substituting these into the bias-variance decomposition $E[\Delta y^2] = \text{Var}(\Delta y) + E[\Delta y]^2$ yields the theorem.
\end{proof}

\subsubsection{Floating-Point Weights}
We assume a floating-point representation with one sign bit $s$, $E$ exponent bits, and $M$ mantissa bits:
\begin{equation}
    w = (-1)^s \cdot m \cdot 2^{e-\mathrm{bias}},
\end{equation}
where $m = 1 + \sum_{k=1}^{M} f_k 2^{-k}$ and $e = \sum_{j=0}^{E-1} b_j 2^j$. Unlike integer formats, bit flips in floating-point may induce multiplicative distortions through the exponent.

For analytical tractability, we consider the finite decoded outcomes of the corrupted representation. Subnormal values are included in the moments below, while NaNs and $\pm\infty$ are excluded from the moment calculation.

\begin{theorem}[Expected Error for Floating-Point Weights]
Let $w_i'$ denote the corrupted floating-point weights after independent bit flips with probability $p$, and define
\begin{equation}
    \Gamma_i = \mathbb{E}[w_i'], \qquad \Omega_i = \mathbb{E}[{w_i'}^2].
\end{equation}
Then
\begin{equation}
    \mathbb{E}[(y' - y)^2]
    =
    \sum_{i=1}^{n} x_i^2 \left(\Omega_i - \Gamma_i^2\right)
    +
    \left(\sum_{i=1}^{n} x_i (\Gamma_i - w_i)\right)^2.
\end{equation}
\end{theorem}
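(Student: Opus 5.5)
The argument mirrors the integer case: we work directly at the level of the corrupted weights rather than the individual bit errors, because for floating point the map from bits to value is no longer additive.

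\textbf{Step 1 (output error as a sum over weights).} Since the inputs $x$ are fixed and only the stored parameters are corrupted, the output error is
\[
y' - y = \sum_{i=1}^n x_i (w_i' - w_i).
\]
Each $w_i'$ is a deterministic function of the corrupted bit pattern of weight $i$, namely its sign, exponent and mantissa bits. The bits of different weights flip independently, so the random variables $w_1',\dots,w_n'$ are mutually independent. This remains true after we condition on the event that every decoded value is finite; we treat the NaN/$\pm\infty$ exclusion as a per-weight conditioning, so the conditioning factorizes across weights.

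\textbf{Step 2 (moments of a single weight).} By definition $\mathbb{E}[w_i'-w_i] = \Gamma_i - w_i$. Since $w_i$ is a constant,
\[
\mathrm{Var}(w_i'-w_i) = \mathrm{Var}(w_i') = \Omega_i - \Gamma_i^2.
\]
Both moments are finite: conditioned on finite outcomes, $w_i'$ ranges over finitely many representable values, each with an explicit probability $p^{d}(1-p)^{1+E+M-d}$, where $d$ is the Hamming distance from the stored bit pattern.

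\textbf{Step 3 (bias--variance decomposition).} By independence the covariances vanish, so
\[
\mathrm{Var}(y'-y) = \sum_i x_i^2(\Omega_i-\Gamma_i^2), \qquad \mathbb{E}[y'-y] = \sum_i x_i(\Gamma_i-w_i).
\]
Substituting these into $\mathbb{E}[(y'-y)^2] = \mathrm{Var}(y'-y) + (\mathbb{E}[y'-y])^2$, which is the decomposition used in the noise model, gives the stated formula.

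The main obstacle is rigor around the excluded NaN/$\infty$ outcomes. If the exclusion is imposed jointly, for instance by conditioning on the whole output being finite, the conditioning could in principle couple the weights. I would therefore state explicitly that the exclusion is applied per weight: $\Gamma_i$ and $\Omega_i$ are moments of the conditional law of $w_i'$ given that $w_i'$ is finite. Under this convention independence across weights is preserved and the derivation above goes through unchanged.
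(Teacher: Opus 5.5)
Your proposal is correct and follows the paper's proof: you write $y'-y=\sum_i x_i(w_i'-w_i)$, use independence across weights to get $\mathrm{Var}(y')=\sum_i x_i^2(\Omega_i-\Gamma_i^2)$ and $\mathbb{E}[y']-y=\sum_i x_i(\Gamma_i-w_i)$, and then apply the bias--variance decomposition. Your per-weight conditioning convention for excluding NaN/$\pm\infty$ outcomes tightens a point the paper leaves implicit. One small correction: after conditioning, each outcome probability is $p^{d}(1-p)^{1+E+M-d}$ divided by $P(w_i'\text{ finite})$, though this does not affect the finiteness of the moments or the argument.
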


\begin{proof}
Since $y'=\sum_{i=1}^n x_i w_i'$, independence across corrupted weights gives
\begin{equation}
    \mathrm{Var}(y') = \sum_{i=1}^{n} x_i^2 \mathrm{Var}(w_i')
    = \sum_{i=1}^{n} x_i^2(\Omega_i-\Gamma_i^2),
\end{equation}
and
\begin{equation}
    \mathbb{E}[y']-y=\sum_{i=1}^{n} x_i(\Gamma_i-w_i).
\end{equation}
Applying the bias--variance decomposition yields the claim.
\end{proof}

Write
\begin{equation}
    w_i' = S_i' \, Man_i' \, Exp_i',
\end{equation}
with
\begin{equation}
    S_i' = (-1)^{s_i'}, \qquad
    Man_i' = 1 + \sum_{k=1}^{M} f_{ik}' 2^{-k}, \qquad
    Exp_i' = 2^{e_i' - \mathrm{bias}}.
\end{equation}
Under independent bit flips,
\begin{align}
    \Gamma_i &= \mathbb{E}[S_i']\,\mathbb{E}[Man_i']\,\mathbb{E}[Exp_i'], \\
    \Omega_i &= \mathbb{E}[{S_i'}^2]\,\mathbb{E}[{Man_i'}^2]\,\mathbb{E}[{Exp_i'}^2].
\end{align}

The sign moments are
\begin{equation}
    \mathbb{E}[S_i']=(1-2p)(-1)^{s_i}, \qquad \mathbb{E}[{S_i'}^2]=1.
\end{equation}
The mantissa behaves as a bounded fixed-point quantity, with
\begin{equation}
    \mathbb{E}[Man_i'] =
    Man_i + p\sum_{k=1}^{M}(1-2f_{ik})2^{-k},
\end{equation}
and
\begin{equation}
    \mathbb{E}[{Man_i'}^2]
    =
    \mathbb{E}[Man_i']^2 + p(1-p)\sum_{k=1}^{M}4^{-k}.
\end{equation}
Hence the mantissa contribution remains uniformly bounded.

For the exponent,
\begin{equation}
    \mathbb{E}[Exp_i']
    =
    2^{e_i-\mathrm{bias}}
    \prod_{j=0}^{E-1}
    \left[(1-p) + p\,2^{(1-2b_{ij})2^j}\right],
\end{equation}
and
\begin{equation}
    \mathbb{E}[{Exp_i'}^2]
    =
    2^{2(e_i-\mathrm{bias})}
    \prod_{j=0}^{E-1}
    \left[(1-p) + p\,2^{2(1-2b_{ij})2^j}\right].
\end{equation}
Thus exponent corruption dominates the floating-point error: while mantissa perturbations remain bounded, a single flipped exponent bit contributes factors of the form $2^{\pm 2^j}$. Accordingly, the second moment admits worst-case doubly-exponential growth in the exponent width,
\begin{equation}
    \mathbb{E}[{Exp_i'}^2] = O\!\left(2^{2^E}\right),
\end{equation}
up to format-dependent constants. We use this as a worst-case upper-bound statement rather than a generic scaling law for every weight.

\subsubsection{Affine Quantized (AQ) Weights}

Pure integer quantization is highly resilient to bit flips, but in practice it often does not achieve the same accuracy as affine quantization because it cannot fully capture the scale and offset of real-valued weight distributions. For this reason, practical quantized models usually reconstruct real weights as
\begin{equation}
    \hat{w} = S(w - Z),
\end{equation}
where $w$ is the stored integer weight, $Z$ is the integer zero-point, and $S$ is the floating-point scale.

Affine quantization can be applied at different granularities: model-wise, layer-wise, or weight-/neuron-wise. Finer granularity generally gives better accuracy, with weight-/neuron-wise usually best and model-wise worst. However, finer granularity also increases storage overhead because more quantization parameters must be stored. In the extreme, per-weight affine quantization can nearly triple storage once a separate scale and zero-point must be stored for each weight. For this reason, layer-wise affine quantization is often the practical choice.

This granularity also changes the fault model. In pure integer quantization, a bit flip affects only one stored weight. In layer-wise affine quantization, a bit flip in an integer weight still causes a local error, but a bit flip in the shared quantization parameters affects all reconstructed weights in the layer. In particular, corruption in the integer zero-point $Z$ introduces a shared additive distortion, while corruption in the floating-point scale $S$ introduces a shared multiplicative distortion. Thus, affine quantization introduces a rare but potentially catastrophic layer-wide failure mode.

At the same time, such events are statistically unlikely. For a layer with $n$ quantized weights of $B_w$ bits each, integer zero-point precision $B_Z$, and floating-point scale precision $B_S$, the total number of bits is
\begin{equation}
    N_{\text{tot}} = nB_w + B_Z + B_S.
\end{equation}
Hence,
\begin{equation}
    P(\text{quant. param. bit}) = \frac{B_Z + B_S}{nB_w + B_Z + B_S},
\end{equation}
which is much smaller than the probability of hitting an ordinary weight bit when $n \gg 1$. This quantity is not the output error itself. It is the fraction of stored layer bits belonging to the shared affine quantization parameters (metadata), equivalently the probability that a uniformly random corrupted bit hits the scale or zero-point rather than an ordinary integer weight. We use it to separate two effects: frequent localized integer-weight errors and rare shared metadata errors. When such a metadata fault occurs, the resulting output error is characterized by the two cases below: protected quantization parameters recover the localized integer behavior, while corrupted $Z$ and $S$ introduce additive and multiplicative layer-wide distortions. Therefore, under random independent bit flips, most faults still strike the quantized weights rather than the shared quantization parameters.

We therefore focus on the practically relevant layer-wise setting and consider two cases.

\paragraph{Case 1: Protected Quantization Parameters}

\begin{theorem}[Layer-wise AQ Error with Protected Quantization Parameters]
Let
\begin{equation}
    y = S\sum_{i=1}^{n} x_i (w_i - Z), \qquad
    y' = S\sum_{i=1}^{n} x_i (w_i' - Z),
\end{equation}
where $w_i'$ are corrupted integer weights, while the shared floating-point scale $S$ and integer zero-point $Z$ are fault-free. Then
\begin{equation}
    \mathbb{E}[(y' - y)^2]
    =
    S^2 \sum_{i=1}^{n} x_i^2 \mathrm{Var}(w_i')
    +
    S^2\left(\sum_{i=1}^{n} x_i (\mathbb{E}[w_i'] - w_i)\right)^2.
\end{equation}
\end{theorem}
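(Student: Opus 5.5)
The plan is to reduce the statement to the bias--variance decomposition already used for the integer and floating-point theorems, treating $S$ and $Z$ as deterministic constants because they are fault-free. First, subtract the two expressions. The zero-point terms cancel exactly, so
\begin{equation}
    \Delta y = y' - y = S\sum_{i=1}^{n} x_i (w_i' - w_i).
\end{equation}
This already shows that protecting $Z$ removes any shared additive distortion: only the local integer-weight errors $\Delta w_i = w_i' - w_i$ remain, scaled by the common constant $S$.

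Second, compute the first two moments of $\Delta y$. By linearity of expectation,
\begin{equation}
    \mathbb{E}[\Delta y] = S\sum_{i=1}^{n} x_i(\mathbb{E}[w_i'] - w_i).
\end{equation}
For the variance, use the independent Bernoulli bit-flip model. Distinct stored integer weights occupy disjoint sets of bits, so the corrupted weights $w_1',\dots,w_n'$ are mutually independent and all covariance terms vanish. Since $w_i$ is deterministic, $\mathrm{Var}(w_i' - w_i) = \mathrm{Var}(w_i')$, which gives
\begin{equation}
    \mathrm{Var}(\Delta y) = S^2\sum_{i=1}^{n} x_i^2 \mathrm{Var}(w_i').
\end{equation}
Third, substitute these two quantities into $\mathbb{E}[\Delta y^2] = \mathrm{Var}(\Delta y) + \mathbb{E}[\Delta y]^2$. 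This yields the stated formula, with the factor $S^2$ pulled out of the squared bias.

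As an optional remark, one can then plug in the moments from the integer-weight theorem: $\mathrm{Var}(w_i') = p(1-p)\frac{4^B-1}{3}$ and $\mathbb{E}[w_i'] - w_i = -p(1+2w_i)$. This recovers the integer-weight error scaled by $S^2$, and shows that the protected case reduces to localized integer behaviour.

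No step is a genuine obstacle. The one point to state explicitly is that independence across the $w_i'$ follows from the disjointness of their bit storage, together with the fact that $S$ and $Z$ are non-random. Without this, covariance terms would appear in the variance.
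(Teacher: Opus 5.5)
Your proposal is correct and follows essentially the same route as the paper: cancel $Z$ to get $y'-y = S\sum_{i} x_i(w_i'-w_i)$, compute the mean and variance using independence of the corrupted weights, and apply the bias--variance decomposition. Your explicit justification of that independence via the disjoint bit storage of distinct weights is a small addition; the paper simply asserts independence across corrupted weights.
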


\begin{proof}
Since $S$ and $Z$ are unchanged,
\begin{equation}
    y' - y = S\sum_{i=1}^{n} x_i (w_i' - w_i).
\end{equation}
Hence
\begin{equation}
    \mathbb{E}[(y'-y)^2]
    =
    \mathrm{Var}(y') + (\mathbb{E}[y']-y)^2.
\end{equation}
Using independence across corrupted weights,
\begin{equation}
    \mathrm{Var}(y')
    =
    S^2 \sum_{i=1}^{n} x_i^2 \mathrm{Var}(w_i'),
\end{equation}
and
\begin{equation}
    \mathbb{E}[y']-y
    =
    S\sum_{i=1}^{n} x_i(\mathbb{E}[w_i']-w_i).
\end{equation}
Substituting yields the result.
\end{proof}

Thus, when the quantization parameters are protected, affine quantization preserves the bounded and localized error behavior of integer weights, scaled by the fixed factor $S^2$.

\paragraph{Case 2: Corrupted Quantization Parameters}

\begin{theorem}[Structure of Layer-wise AQ Error with Corrupted Quantization Parameters]
Let
\begin{equation}
    y' = S' \sum_{i=1}^{n} x_i (w_i' - Z'),
\end{equation}
where the integer weights $w_i'$, the floating-point scale $S'$, and the integer zero-point $Z'$ may all be corrupted. Then the output error contains both a localized integer term and shared layer-wide terms, with
\begin{equation}
    \mathrm{Var}\!\left(\sum_{i=1}^{n} x_i (w_i' - Z')\right)
    =
    \sum_{i=1}^{n} x_i^2 \mathrm{Var}(w_i')
    +
    \left(\sum_{i=1}^{n} x_i\right)^2 \mathrm{Var}(Z').
\end{equation}
Moreover, corruption in the floating-point scale $S'$ multiplicatively scales the entire accumulation, introducing an additional shared distortion analogous to the floating-point failure mode.
\end{theorem}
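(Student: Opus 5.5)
The plan is to work with the inner accumulation $A = \sum_{i=1}^{n} x_i (w_i' - Z')$ and rewrite it as
\begin{equation}
    A = \sum_{i=1}^{n} x_i w_i' \;-\; Z' \sum_{i=1}^{n} x_i .
\end{equation}
This splits it into a localized term $L = \sum_i x_i w_i'$ and a shared term $G = Z' \sum_i x_i$. The inputs $x$ are fixed, so $\sum_i x_i$ is a deterministic scalar. Hence $\mathrm{Var}(G) = (\sum_i x_i)^2 \mathrm{Var}(Z')$, which is exactly the second term of the claim.

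The one step that needs care is the cross term $\mathrm{Cov}(L, G)$. Under the independent Bernoulli model, every stored bit flips independently. The bits of $Z'$ are physically distinct from the bits of the $w_i'$, so $Z'$ is independent of $(w_1',\dots,w_n')$. The $w_i'$ are also mutually independent, exactly as in the proofs of the integer-weight theorem and of Case~1. Therefore $\mathrm{Cov}(L,G)=0$ and $\mathrm{Var}(L)=\sum_i x_i^2\mathrm{Var}(w_i')$, and adding the two variances gives the displayed identity. Each $\mathrm{Var}(w_i')$ could be made explicit as $p(1-p)\tfrac{4^{B_w}-1}{3}$ by the integer-weight theorem, and $\mathrm{Var}(Z')$ likewise as $p(1-p)\tfrac{4^{B_Z}-1}{3}$. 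This is not needed for the structural statement.

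For the scale, I would write $y' = S' A$ with $S'$ independent of $A$, again by bit independence. Then I would argue qualitatively through the moment identities:
\begin{align}
    \mathbb{E}[y'] &= \mathbb{E}[S']\,\mathbb{E}[A], \\
    \mathbb{E}[{y'}^2] &= \mathbb{E}[{S'}^2]\,\mathbb{E}[A^2].
\end{align}
So $S'$ multiplies the whole accumulation, and its second moment enters as a common factor on every term. Expanding $\mathbb{E}[{S'}^2]$ via the floating-point decomposition (sign, mantissa and exponent moments) produces the exponent factor. That factor has the same worst-case $O(2^{2^E})$ behaviour as the floating-point theorem. This justifies calling it ``analogous to the floating-point failure mode''.

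I expect no genuine obstacle. The only subtle point is stating the independence assumption explicitly: that $Z'$, $S'$ and the $w_i'$ occupy disjoint bit sets. The multiplicative claim is descriptive rather than a precise identity, so I would present it through the factorization above rather than through a closed-form MSE.
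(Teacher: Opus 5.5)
Your proposal is correct and follows essentially the same route as the paper. Both split the accumulation into $\sum_i x_i w_i'$ and $Z'\sum_i x_i$, use independence of $Z'$ from the mutually independent $w_i'$ to remove the cross term, and treat $S'$ as a multiplicative factor on the whole accumulation. Your moment factorization $\mathbb{E}[{y'}^2]=\mathbb{E}[{S'}^2]\,\mathbb{E}[A^2]$ makes the paper's qualitative remark about the scale more explicit, but it is the same argument.
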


\begin{proof}
Define
\begin{equation}
    H' = \sum_{i=1}^{n} x_i (w_i' - Z')
    = \sum_{i=1}^{n} x_i w_i' - Z' \sum_{i=1}^{n} x_i.
\end{equation}
Assuming the corrupted weights are independent across $i$ and independent of $Z'$, the variance decomposes as
\begin{equation}
    \mathrm{Var}(H')
    =
    \mathrm{Var}\!\left(\sum_{i=1}^{n} x_i w_i'\right)
    +
    \mathrm{Var}\!\left(Z' \sum_{i=1}^{n} x_i\right).
\end{equation}
Therefore,
\begin{equation}
    \mathrm{Var}(H')
    =
    \sum_{i=1}^{n} x_i^2 \mathrm{Var}(w_i')
    +
    \left(\sum_{i=1}^{n} x_i\right)^2 \mathrm{Var}(Z').
\end{equation}
The first term is the usual localized contribution from corrupted integer weights. The second term is a correlated layer-wide contribution caused by the shared zero-point. Since $y' = S'H'$, corruption in the floating-point scale $S'$ additionally scales the full accumulation, producing another layer-wide distortion.
\end{proof}

Therefore, layer-wise affine quantization is generally less resilient than pure integer quantization, because it introduces a small-probability but high-impact failure mode through the shared quantization parameters. However, it remains more resilient than floating-point weights in expectation, since most random bit flips still strike bounded integer payloads rather than floating-point exponent fields.

\subsection{Binary Neural Networks (BNNs)}

Binary Neural Networks represent the limit of quantization where weights are restricted to $w \in \{-1, +1\}$. This is typically implemented using a sign function on latent weights during training \citep{ste, ste2, bnn}, leading to direct 1-bit storage ($0 \to -1, 1 \to +1$) at inference time.

\subsubsection{Bit-Flip Error in BNNs}
Let the mapping be $w = 2b - 1$ where $b \in \{0, 1\}$. A bit flip ($b \to 1-b$) causes the weight to flip sign completely ($w \to -w$). The magnitude of the error is always fixed at $|\Delta w| = 2$.

\begin{theorem}[Expected MSE for BNNs]
For a BNN layer with inputs $x$ and binary weights $w$, subject to bit flips with probability $p$:
\begin{equation}
    E[(y' - y)^2] = 4p(1-p) \|x\|_2^2 + 4p^2 \left( \sum_{i=1}^n x_i w_i \right)^2
\end{equation}
\end{theorem}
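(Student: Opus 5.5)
The argument follows the integer-weight theorem: compute the mean and variance of each corrupted weight, then combine them through the bias--variance decomposition.

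For a single weight, model the corruption as a multiplicative sign factor. Write $w_i' = \sigma_i w_i$, where $\sigma_i = -1$ with probability $p$ (the stored bit flipped) and $\sigma_i = +1$ otherwise. This matches the stated mapping $w = 2b-1$: a flip sends $b \to 1-b$ and hence $w \to -w$. Then
\begin{equation}
    \Delta w_i = w_i' - w_i = (\sigma_i - 1) w_i ,
\end{equation}
which equals $-2w_i$ with probability $p$ and $0$ otherwise. Since $w_i^2 = 1$, this gives
\begin{equation}
    E[\Delta w_i] = -2p\,w_i, \qquad \mathrm{Var}(\Delta w_i) = 4w_i^2\, p(1-p) = 4p(1-p).
\end{equation}
Equivalently, one can view this as the single-bit case $B=1$ of the integer analysis, with place value $V = 2$ and mean $E[\epsilon] = p(1-2b)\cdot 2 = -2pw$.

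Next, aggregate over the neuron. The output error is $\Delta y = \sum_i x_i \Delta w_i$, and the $\Delta w_i$ are independent because each weight occupies its own bit. Therefore
\begin{equation}
    \mathrm{Var}(\Delta y) = \sum_i x_i^2 \cdot 4p(1-p) = 4p(1-p)\,\|x\|_2^2,
\end{equation}
\begin{equation}
    E[\Delta y] = -2p \sum_i x_i w_i .
\end{equation}
Substituting into $E[\Delta y^2] = \mathrm{Var}(\Delta y) + E[\Delta y]^2$ gives $4p(1-p)\|x\|_2^2 + 4p^2\left(\sum_i x_i w_i\right)^2$, which is the claim. The negative sign of the mean vanishes on squaring.

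The only delicate step is checking that the bias term is exactly $(-2p\,y)^2$. This relies on the identity $w_i^2 = 1$, which removes the dependence on $w_i$ from the variance but keeps it in the mean. Everything else is a routine computation.
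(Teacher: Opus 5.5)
Your proof is correct and follows the paper's argument: per-weight mean $E[\Delta w_i] = -2p\,w_i$ and variance $4p(1-p)$, then independence across weights and the bias--variance decomposition. One small correction to your closing remark: the bias term $(-2p\,y)^2$ follows from linearity of expectation alone, and the identity $w_i^2 = 1$ is needed only to make the variance $4w_i^2\,p(1-p)$ independent of $w_i$.
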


\begin{proof}
The error in a single weight is $\Delta w$. If a flip occurs (probability $p$), $\Delta w = -2w$.
Mean error: $E[\Delta w] = p(-2w) = -2pw$.
Variance: $\text{Var}(\Delta w) = E[\Delta w^2] - E[\Delta w]^2 = p(4) - (-2pw)^2 = 4p - 4p^2 = 4p(1-p)$.
Substituting these into the general bias-variance equation yields the result.
\end{proof}

\textbf{Scaling Analysis:}
Unlike multi-bit integers where error magnitude depends on \textit{which} bit flips (Sign vs LSB), BNN errors are uniform. While the variance coefficient ($4p$) is high compared to the LSB of an INT8 weight, BNNs have no "explosive" failure modes (like MSB flips in integers or exponent flips in floats). They are maximally robust in terms of worst-case bounded error.

\subsection{Comparison of Numerical Formats}

Based on the derivations in Theorems 1 through 5, we observe a distinct hierarchy of resilience to bit-flip errors. The fundamental difference lies in how the bit representation maps to the value: integer errors are \textit{additive} and linear with respect to bit position, whereas floating-point errors are \textit{multiplicative} and exponential with respect to bit position (specifically the exponent).

\subsubsection{Hierarchy of Resilience for DNNs}

\textbf{1. BNN (Very High Resiliency)} \\
The extreme case of Integer representation using only 1 bit, the sign bit. In this special case, a bit flip contributes to a small constant error.

\textbf{2. Integer / Fixed-Point (High Resiliency)} \\
A bit flip contributes an additive error proportional to $2^k$. The maximum possible error is bounded by the dynamic range of the integer, which scales as $2^B$. The variance of the error is constant across all weights of the same bit-width, regardless of the weight's value.

\textbf{3. Affine Quantization (AQ) (Medium Resiliency)} \\
This introduces a shared-failure mode absent in pure integer representations. While the bulk of the parameters (the weights) are robust integers, corruption of the shared zero-point $Z$ or scale $S$ affects the entire layer simultaneously. A corruption in $Z$ introduces a correlated layer-wide term proportional to $\left(\sum_i x_i\right)^2 \mathrm{Var}(Z')$, while corruption in the floating-point scale $S$ multiplicatively distorts the full accumulation. As a result, layer-wise affine quantization is generally less resilient than pure integer schemes, although still more resilient than standard floating-point weights in expectation.

\textbf{4. Floating-Point (Low Resiliency)} \\
Floating-point is the most fragile representation. Errors in the exponent bits are multiplicative. If the MSB of the exponent flips, the value can change by several orders of magnitude (e.g., $10^{-3} \to 10^4$). The exponent contribution can induce very large multiplicative distortions. In the worst case, the second moment grows doubly-exponentially with the exponent width, consistent with the upper bound derived in the floating-point analysis. Consequently, a single bit flip in a standard FP32 weight can result in an error magnitude that dwarfs the signal of the entire network.

\subsubsection{Summary of Error Scaling}

Table \ref{tab:resilience_comparison} summarizes the error characteristics. Note the difference in scaling between Integer (exponential in $B$) and Floating-Point (double exponential in $E$).

\begin{table}[h]
\centering
\caption{Comparison of Bit-Flip Resilience by Numerical Format}
\label{tab:resilience_comparison}
\begin{tabular}{@{}lcccc@{}}

\toprule
\textbf{Format} & \textbf{Error Type} & \textbf{Dominant Noise Source} & \textbf{Max Error Scaling} & \textbf{Resilience} \\ \midrule
\textbf{BNN} & Additive & Sign & Constant & \textbf{Very High} \\
\textbf{Integer} & Additive & Magnitude Bits & $O(2^B)$ & \textbf{High} \\
\textbf{AQ} & Additive/Multiplicative & Zero-Point / Scale & $O(2^{2^E})$ (via Scale) & \textbf{Medium}\\
\textbf{Float (FP)} & Multiplicative & Exponent Bits & $O(2^{2^E})$ & \textbf{Low} \\ \bottomrule
\end{tabular}
\end{table}

\subsection{Error Propagation through Activation Functions}

We analyze the expected squared error (MSE) at the output of the activation function $a = \phi(y)$. Let $y$ be the clean pre-activation and $y' = y + \xi$ be the corrupted pre-activation, where $\xi$ represents the total weight error noise. We do not assume $\xi$ is small, as bit flips can induce large perturbations.

\subsubsection{Rectified Linear Unit (ReLU)}
Let $\phi(x) = \max(0, x)$. The error depends on the regime of the clean input $y$ relative to the noise $\xi$.

\begin{theorem}[Expected MSE for ReLU]
Given clean input $y$ and noise $\xi$ with PDF $f_\xi(\xi)$, the expected MSE $E[(\Delta a)^2]$ is:
\begin{equation}
    E[(\Delta a)^2] = \int_{-y}^{\infty} \xi^2 f_\xi(\xi) d\xi + \int_{-\infty}^{-y} (-y)^2 f_\xi(\xi) d\xi \quad (\text{for } y > 0)
\end{equation}
\end{theorem}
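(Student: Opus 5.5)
The plan is to compute $E[(\Delta a)^2]$ directly from the definition, with $\Delta a = \phi(y+\xi) - \phi(y)$ and $\phi(x)=\max(0,x)$. The clean input $y$ is treated as fixed, and $\xi$ is the only random quantity, with density $f_\xi$. Since $y>0$, the clean activation is simply $\phi(y)=y$. The whole computation therefore reduces to determining $\phi(y+\xi)$ as a function of $\xi$ and integrating $(\Delta a)^2$ against $f_\xi$.

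The key step is to split the real line at the point where the corrupted pre-activation changes sign, $\xi=-y$.

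On the region $\xi \ge -y$, we have $y+\xi \ge 0$. So $\phi(y+\xi)=y+\xi$ and $\Delta a = \xi$; ReLU acts as the identity and the noise passes through unchanged.

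On the region $\xi < -y$, the corrupted input is clipped to zero. So $\phi(y+\xi)=0$ and $\Delta a = -y$.

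By linearity of expectation over this partition,
\begin{equation*}
E[(\Delta a)^2] = \int_{-y}^{\infty} \xi^2 f_\xi(\xi)\,d\xi + \int_{-\infty}^{-y} y^2 f_\xi(\xi)\,d\xi,
\end{equation*}
which is the stated expression, since $(-y)^2 = y^2$. The boundary point $\xi=-y$ has measure zero under a density, and both pieces agree there anyway, so the convention for assigning it does not matter.

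I expect no real obstacle; the only care needed concerns integrability and the existence of a density. No smallness assumption on $\xi$ is needed, but $E[\xi^2]$ must be finite for the first integral to converge. In the floating-point setting, the previous section shows $\Omega_i$ is finite once NaNs and infinities are excluded, so this holds there. The second integral is always finite, being $y^2\,P(\xi<-y) \le y^2$. If $\xi$ is discrete, as bit-flip noise really is, I would restate the same argument with sums, or with Lebesgue--Stieltjes integrals against the law of $\xi$.

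As a remark, I would note that the second term shows ReLU caps the error at $y^2$ on the clipped side, while leaving positive excursions unbounded. This motivates the bounded activations treated next.
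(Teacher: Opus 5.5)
Your proposal is correct and follows the paper's proof essentially verbatim: you fix $y>0$, split at $\xi=-y$, get $\Delta a=\xi$ on one side and $\Delta a=-y$ on the other, and integrate against $f_\xi$. Your remarks on the measure-zero boundary and on integrability are harmless additions, and since both integrands are nonnegative the identity in fact holds in $[0,\infty]$ even without assuming $E[\xi^2]<\infty$.
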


\begin{proof}
Let $\Delta a = \text{ReLU}(y+\xi) - \text{ReLU}(y)$.
Case 1: $y > 0$.
If $\xi > -y$, then $y+\xi > 0$, so $\Delta a = (y+\xi) - y = \xi$.
If $\xi \le -y$, then $y+\xi \le 0$, so $\Delta a = 0 - y = -y$.
Taking the expectation over $\xi$:
$E[\Delta a^2] = \int_{-y}^{\infty} \xi^2 f(\xi) d\xi + \int_{-\infty}^{-y} y^2 f(\xi) d\xi$.
(The case for $y \le 0$ is symmetric).
\end{proof}

\textbf{Implication:}
ReLU offers \textbf{no error attenuation} for positive perturbations. If $\xi$ is large and positive, $\Delta a = \xi$. The error propagates linearly and unbounded.

\subsubsection{Sigmoidal Activations with Temperature}
Consider $\sigma_\tau(x) = \frac{1}{1 + e^{-x/\tau}}$. We analyze resilience to large noise $\xi$ (e.g., bit flips).

\begin{theorem}[Sigmoid Saturation Resilience]
Let $y$ be a clean input such that $|y| \gg \tau$ (saturated regime). Let $\xi$ be a noise term.
As $\tau \to 0$ (Step Function limit), the expected MSE converges to the probability of a sign flip:
\begin{equation}
    \lim_{\tau \to 0} E[(\sigma_\tau(y+\xi) - \sigma_\tau(y))^2] = P(\text{sign}(y+\xi) \neq \text{sign}(y))
\end{equation}
\end{theorem}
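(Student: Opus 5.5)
The plan is to pass to the pointwise limit of the integrand and then exchange limit and expectation by dominated convergence. Fix the clean input $y \neq 0$; the saturation hypothesis $|y|\gg\tau$ only says we are already near the step-function regime, and the limit statement needs nothing beyond $y\neq 0$. Write $g_\tau(\xi) = (\sigma_\tau(y+\xi)-\sigma_\tau(y))^2$.

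The first step is the pointwise limit. For any $z\neq 0$, $\sigma_\tau(z)=1/(1+e^{-z/\tau})$ tends to the Heaviside value $H(z)=\mathbf{1}[z>0]$ as $\tau\to 0^+$, since $e^{-z/\tau}$ tends to $0$ or $\infty$ according to the sign of $z$. Because $y\neq 0$, we have $\sigma_\tau(y)\to H(y)$. For every $\xi\neq -y$, we also have $\sigma_\tau(y+\xi)\to H(y+\xi)$. Hence
\begin{equation}
g_\tau(\xi)\to (H(y+\xi)-H(y))^2 = \mathbf{1}[\mathrm{sign}(y+\xi)\neq \mathrm{sign}(y)],
\end{equation}
using that the squared difference of two $\{0,1\}$ values is the indicator that they differ.

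The second step is the exchange of limit and expectation. The integrand satisfies $0\le g_\tau\le 1$ because $\sigma_\tau$ takes values in $(0,1)$. The constant $1$ is integrable against any probability law of $\xi$, so dominated convergence gives $\lim_{\tau\to0}E[g_\tau(\xi)] = E[\mathbf{1}[\mathrm{sign}(y+\xi)\neq\mathrm{sign}(y)]] = P(\mathrm{sign}(y+\xi)\neq\mathrm{sign}(y))$. Notably, this requires no moment assumptions on $\xi$, so it covers the heavy-tailed floating-point exponent noise from the earlier theorems. That is the point of the result: the bound is uniform in the noise magnitude, unlike the ReLU case, whose first integral $\int_{-y}^{\infty}\xi^2 f_\xi(\xi)\,d\xi$ can be arbitrarily large.

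The main obstacle is the exceptional event $\{\xi=-y\}$. On that event $\sigma_\tau(0)=1/2$ for every $\tau$, so the pointwise limit there is $1/4$, whereas $\mathrm{sign}(0)$ is ambiguous. I would handle this by assuming, as the statement implicitly does by giving $\xi$ a density $f_\xi$, that $P(\xi=-y)=0$. Then convergence holds almost surely and the argument goes through. If $\xi$ has atoms, for instance because weight errors are discrete sums of bit-flip contributions, the statement should be read with $P(y+\xi=0)$ counted with weight $1/4$. Alternatively, one adopts the convention $\mathrm{sign}(0)\neq\mathrm{sign}(y)$ together with the bound $\le$, and notes the discrepancy is at most $\tfrac34 P(\xi=-y)$. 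I would state this caveat explicitly in the proof rather than strengthen the hypotheses elsewhere.
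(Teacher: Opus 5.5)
Your argument is correct and follows the same route as the paper's proof: pass to the pointwise step-function limit $\sigma_\tau(z)\to\mathbb{I}(z>0)$, then note that the squared difference of two indicators is the sign-mismatch indicator. You also supply two things the paper's proof omits. The first is the dominated-convergence justification (with bound $1$) for exchanging limit and expectation. The second is the exceptional event $\{\xi=-y\}$, where $\sigma_\tau(0)=1/2$ gives limiting squared error $1/4$ rather than what the paper's substitution $\mathbb{I}(0>0)=0$ yields; this caveat is a genuine correction, because the sigmoid theorem (unlike the ReLU one) gives $\xi$ no density, and discrete bit-flip noise can place positive mass at $-y$.
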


\begin{proof}
Let $\tau \to 0$. Then $\sigma_\tau(x) \to \mathbb{I}(x > 0)$.
The error is $(\mathbb{I}(y+\xi > 0) - \mathbb{I}(y > 0))^2$.
Since $\mathbb{I} \in \{0, 1\}$, the difference is non-zero (value 1) if and only if the signs differ.
Thus, $E[\Delta a^2] = 1 \cdot P(\text{Sign Mismatch})$.
\end{proof}

\textbf{Comparison with High Temperature ($\tau \to \infty$):}
For high $\tau$, the sigmoid is linear $\sigma_\tau(x) \approx \frac{1}{2} + \frac{x}{4\tau}$. The error is:
\begin{equation}
    E[\Delta a^2] \approx \frac{1}{16\tau^2} E[\xi^2]
\end{equation}
While this attenuates the variance by $1/\tau^2$, it scales with the magnitude of the noise $E[\xi^2]$.

\subsubsection{Comparison of Activation Resilience}

Table \ref{tab:activation_resilience} summarizes the resilience. We conclude that for hardware faults where $E[\xi^2]$ is potentially unbounded (e.g., exponent flips), \textbf{Bounded Activations (Low $\tau$)} are strictly superior to Unbounded Activations (ReLU/High $\tau$).

\begin{table}[h]
\centering
\caption{Resilience to Large-Magnitude Bit Errors}
\label{tab:activation_resilience}
\begin{tabular}{@{}llcc@{}}
\toprule
\textbf{Function} & \textbf{Mechanism} & \textbf{Error Scaling} & \textbf{Rank} \\ \midrule
\textbf{Step / Low-$\tau$} & Saturation (Masking) & $P(\text{Sign Flip})$ (Bounded) & \textbf{1} \\
\textbf{Tanh / Sigmoid} & Saturation (Damping) & Bounded Constant & \textbf{2} \\
\textbf{High-$\tau$ Sigmoid} & Linear Attenuation & $\propto \text{Var}(\xi)$ (Unbounded) & \textbf{3} \\
\textbf{ReLU} & Pass-through & $\propto \text{Var}(\xi)$ (Unbounded) & \textbf{4} \\ \bottomrule
\end{tabular}
\end{table}






\subsection{Width and Sparsity}

We analyze how the total output error scales with the fan-in $n$ (width) and the input sparsity $k$ (number of non-zero elements in $x$).

\subsubsection{Effect of Width ($n$)}
Assume inputs $x_i$ are drawn from a distribution with variance $\sigma_x^2$.
\begin{itemize}
    \item \textbf{Variance Term (Incoherent Error):} Scales linearly with width.
    \begin{equation}
        \text{Var}(y') \propto \sum_{i=1}^n x_i^2 \approx n \sigma_x^2
    \end{equation}
    \item \textbf{Bias Term (Coherent Error):} Scales quadratically with width.
    \begin{equation}
        \text{Bias}(y')^2 \propto \left( \sum_{i=1}^n x_i \right)^2 \approx n^2 \mu_x^2
    \end{equation}
\end{itemize}

\textbf{Analysis:}
If the weight errors have a non-zero mean ($E[\Delta w] \neq 0$, which is true for all discussed formats due to sign-bit asymmetry or bias), wider networks suffer disproportionately from \textbf{Bias Accumulation}. While the signal grows as $n$, the bias error grows as $n^2$, causing the Signal-to-Noise Ratio (SNR) to degrade in very wide, dense layers unless specific compensation is applied.

\subsubsection{Effect of Sparsity}
Let $k$ be the number of non-zero elements in $x$ (L0 norm).
The error terms depend only on active inputs.
\begin{equation}
    E[(y' - y)^2] \propto k \cdot \text{Var}(\Delta w) + k^2 \cdot E[\Delta w]^2
\end{equation}
\textbf{Implication:} High activation sparsity (e.g., from ReLU) acts as a natural error suppressor. When only a small fraction of inputs are active, both the variance and bias terms are reduced relative to a dense layer, with the leading-order variance contribution scaling linearly in the number of active inputs.

\subsection{Depth}
\label{sec:depth}

We model the error propagation across a network of depth $L$. Let $e_l^2 = E[(y'_l - y_l)^2]$ be the MSE at layer $l$.
Assume a simplified layer response with Lipschitz constant (gain) $\lambda$.
\begin{equation}
    e_l^2 \approx \lambda^2 e_{l-1}^2 + \text{Noise}_l
\end{equation}
where $\text{Noise}_l$ is the intrinsic error generated by bit flips in layer $l$ itself.

Under a simplified error-propagation model in which each layer contracts or amplifies the previous-layer MSE by a uniform factor $\lambda^2$ and contributes additive intrinsic noise $\nu$, the end-to-end MSE satisfies the following recursion.

\begin{theorem}[Depth Accumulation under a Uniform Gain Model]
For a network of depth $L$ with uniform layer gain $\lambda$ and intrinsic noise variance $\nu$:
\begin{equation}
    e_L^2 = \nu \sum_{k=0}^{L-1} (\lambda^2)^k = \nu \frac{1 - (\lambda^2)^L}{1 - \lambda^2}
\end{equation}
\end{theorem}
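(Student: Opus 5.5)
The plan is to unroll the linear recursion $e_l^2 = \lambda^2 e_{l-1}^2 + \nu$ by induction on depth, treating the approximation in the uniform gain model of Section~\ref{sec:depth} as an equality. The base condition is that the input to the network is fault-free, so $e_0^2 = 0$. This is the natural choice because bit flips occur only in stored parameters, not in the data. Under this convention, the first layer contributes only its intrinsic noise, giving $e_1^2 = \lambda^2\cdot 0 + \nu = \nu$. This matches the claimed formula with $L=1$.

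For the inductive step, I assume $e_{L-1}^2 = \nu\sum_{k=0}^{L-2}(\lambda^2)^k$. Substituting into the recursion gives
\begin{equation}
e_L^2 = \lambda^2 \nu\sum_{k=0}^{L-2}(\lambda^2)^k + \nu = \nu\sum_{k=1}^{L-1}(\lambda^2)^k + \nu = \nu\sum_{k=0}^{L-1}(\lambda^2)^k .
\end{equation}
This index shift is the whole content of the induction. It has a clear interpretation: the noise injected at layer $l$ is propagated through the $L-l$ subsequent layers and is therefore multiplied by $(\lambda^2)^{L-l}$. Summing these contributions over $l=1,\dots,L$ gives the same series. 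I would state this unrolled form as a one-line alternative derivation.

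The closed form follows from the finite geometric series identity $\sum_{k=0}^{L-1} r^k = (1-r^L)/(1-r)$ with $r=\lambda^2$. This requires $\lambda^2\neq 1$. In the boundary case $\lambda^2=1$, the sum reduces to $e_L^2 = L\nu$, which is also the limit of the closed form as $\lambda^2\to1$. I would note this explicitly, since it is the linear-accumulation regime mentioned in the section introduction. The three regimes would then be recorded as follows:
\begin{itemize}
\item $\lambda<1$: saturation to $\nu/(1-\lambda^2)$.
\item $\lambda=1$: linear growth, $e_L^2 = L\nu$.
\item $\lambda>1$: exponential amplification.
\end{itemize}

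The computation itself poses no difficulty. The main point needing care is justifying the modeling assumptions behind adding the two terms without a cross term. Treating $e_l^2$ as the sum of the propagated MSE and $\nu$ requires that the intrinsic noise at layer $l$ be zero-mean and uncorrelated with the propagated error. Independence of bit flips across layers gives this for the variance parts, but the bias terms from the earlier theorems break it. I would therefore state that the theorem holds exactly under this uncorrelated-noise model with $\nu$ as the effective per-layer intrinsic MSE.
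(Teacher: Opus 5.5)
Your proposal is correct and matches the paper's argument. The paper gives no separate proof: it treats the recursion $e_l^2 \approx \lambda^2 e_{l-1}^2 + \nu$ as an equality, unrolls it from a fault-free input ($e_0^2=0$), and sums the geometric series. Your two additions are sound refinements of assumptions the paper leaves implicit: the $\lambda^2=1$ case, where the closed form must be read as its limit $L\nu$, and the caveat that the additive decomposition needs no cross term between intrinsic and propagated error, which the nonzero bias terms of the earlier theorems can violate.
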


\textbf{Regimes of Stability:}
\begin{itemize}
    \item \textbf{Attenuating ($\lambda < 1$):} The error stabilizes to a finite asymptote $\frac{\nu}{1-\lambda^2}$. The network is robust to infinite depth.
    \item \textbf{Exploding ($\lambda > 1$):} The error grows exponentially with depth $O(\lambda^{2L})$. Deep networks without residual connections or careful normalization are highly susceptible to "error avalanches."
    \item \textbf{Critical ($\lambda = 1$):} The error grows linearly $O(L)$.
\end{itemize}




\subsection{The Path to Weightless Networks}

The preceding analysis identifies the structural ingredients that improve resilience under the bit-flip model studied in this paper. Taken together, these results point directly toward sparse, discrete, logic-based computation. In particular, they isolate three recurring ingredients of resilient architectures:
\begin{enumerate}
    \item \textbf{Minimal Precision:} Theorems 1 and 2 demonstrate that error variance scales with the dynamic range of the weights. Integer and binary representations ($B=1$) minimize this range, eliminating the "explosive" error modes of floating-point exponents.
    \item \textbf{Hard Saturation:} The analysis of activation functions shows that "step-like" functions (low temperature $\tau$) provide the optimal masking capability for large-magnitude faults, bounding the error to a simple sign flip rather than permitting unbounded noise propagation.
    \item \textbf{High Sparsity:} The width-scaling analysis reveals that bias accumulation grows quadratically with fan-in ($n^2$). Reducing fan-in (sparsity) is essential to check this growth and localize the impact of faults.
\end{enumerate}

\subsubsection{From BNN to LUT-Based Models}
Taken together, these trends define a clear architectural progression away from dense, high-precision weighted networks and toward sparse, discrete, logic-based systems.

\textbf{Step 1: Sparse Binary Neural Networks (BNNs)} \\
Combining minimal precision (1-bit weights) with hard saturation (Sign activation) yields the Binary Neural Network. While robust, standard BNNs typically remain dense (high fan-in), leaving them vulnerable to error propagation across the fully connected structure.

\textbf{Step 2: LogicNets (Fixed-Function Sparse BNNs)} \\
By enforcing extreme sparsity on a BNN (e.g., fan-in $K \le 6$), each neuron's behavior becomes deterministic over a small, enumerable input space. Such a neuron is mathematically equivalent to a fixed Boolean function and can be "compiled" into a Lookup Table (LUT) for inference \citep{umuroglu2020logicnets}. This structural shift effectively discretizes the error landscape: a fault no longer affects a weight (which multiplies all inputs) but flips a single entry in a truth table, isolating the error to a specific input pattern.

\textbf{Step 3: Advanced LUT Mappings (PolyLUT, NeuraLUT)} \\
While LogicNets compile trained BNNs into LUTs, subsequent approaches like PolyLUT \citep{polylut} and NeuraLUT \citep{andronic2024neuralut} seek to increase expressivity by mapping more complex functions (e.g., neurons with skip connections or polynomial approximations) into the same LUT fabric. These methods improve performance but still rely on an underlying weight-based abstraction during training.

\textbf{Step 4: Differentiable Weightless Neural Networks (DWNs)} \\
The final evolutionary step is to abandon the weight proxy entirely. DWNs \citep{dwn} learn the LUT contents directly via gradient descent (using relaxations like the Extended Finite Difference method). By treating the LUT entry itself as the learnable parameter, DWNs align the training objective directly with the resilient substrate. This architecture represents the convergence of our theoretical ideals: strictly bounded parameters (bits), high sparsity (localized LUTs), and discrete step-function activations (table lookups).

\subsection{Theoretical Analysis of LUT-Based Resilience}

In this section, we formalize the resilience properties of Lookup Table (LUT)-based networks. We prove a bounded error-isolation property for LUT neurons and then analyze a novel "Symmetric Recovery" phenomenon that arises under extreme corruption.

\subsubsection{Definitions}
Let a LUT layer consist of neurons where each is a $K$-input LUT implementing a Boolean function $f: \{0,1\}^K \to \{0,1\}$. The LUT is stored as a vector $T \in \{0,1\}^{N}$ where $N=2^K$.
For an input address $x \in \{0,1\}^K$, the output is $y = T[x]$.
We define the Bit Error Rate (BER) $p$ as the probability that any storage bit (in $T$ or the input $x$) is flipped.

\subsubsection{Bounded Error Propagation}

\begin{theorem}[LUT Error Isolation]
In a LUT-based neuron with $N=2^K$ entries, a single bit flip in the parameter memory $T$ affects the output for exactly one input pattern out of $2^K$ possible patterns. The expected absolute error for a uniformly distributed input is minimized as $K$ increases.
\end{theorem}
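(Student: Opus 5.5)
The plan is to reduce the statement to a counting argument over the truth table. First I fix a LUT neuron with table $T \in \{0,1\}^N$, $N = 2^K$, and suppose a single parameter bit at index $j$ is flipped, giving $T'$ with $T'[j] = 1 - T[j]$ and $T'[a] = T[a]$ for all $a \neq j$. Since the lookup is $y = T[x]$, the corrupted output is $y' = T'[x]$. So $y' \neq y$ holds exactly when $x = j$. This proves the first claim, that exactly one address out of $2^K$ is affected. The contrast with a weighted neuron is worth making explicit. There a corrupted $w_i$ enters $y = \sum_i w_i x_i$ for every input with $x_i \neq 0$, so the error is non-local, as in Theorems 1--5.

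Next I compute the expected error under a uniform input. For $x$ uniform on $\{0,1\}^K$ and outputs in $\{0,1\}$, we have $|y'-y| = (y'-y)^2 = \mathbb{I}(x = j)$. Hence
\begin{equation}
\mathbb{E}_x\big[|y'-y|\big] = \mathbb{E}_x\big[(y'-y)^2\big] = \Pr(x=j) = 2^{-K},
\end{equation}
which decreases strictly in $K$. I would then extend this to the BER model of the section with input bits held fault-free. Each of the $N$ table entries flips independently with probability $p$, and for any fixed address $x$ the output is wrong with probability exactly $p$. Averaged over inputs, the expected squared error is $p$, independent of $K$, and is bounded by $1$ for every fault pattern. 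This has no analogue of the $O(2^{2^E})$ floating-point tail or the $n^2$ bias growth.

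The main obstacle is interpreting ``minimized as $K$ increases'' consistently. Under the single-fault model the expected error $2^{-K}$ does decrease with $K$. Under a fixed BER, however, the expected number of flipped bits is $p\,2^K$, and the per-neuron expected error stays at $p$, so larger $K$ gives no improvement. I will therefore state the claim explicitly as conditional on a single fault (or a fixed fault count), where the affected input fraction is (number of faults)$/2^K$. I will add a remark that under i.i.d.\ BER the benefit of a LUT is boundedness and locality, not monotonic decay in $K$. If input-bit corruption is also included, I would handle it separately: a flipped address bit sends $x$ to a neighbouring address $x \oplus e_i$, and the error depends on whether $T$ differs across that edge. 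This lies outside the parameter-fault claim, so I would note it only briefly.
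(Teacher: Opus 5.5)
Your proof is correct and follows the paper's own argument: a flip at index $j$ changes the output only when $x=j$, so under uniform inputs the expected absolute error is $2^{-K}$, which decreases in $K$. Your extra remark is also correct. Under i.i.d.\ BER with fault-free inputs, the per-neuron expected error is exactly $p$ regardless of $K$, so the decrease in $K$ holds only for a single fault (or a fixed fault count), which is the setting the paper's proof implicitly uses.
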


\begin{proof}
Let $T'$ be the corrupted table with a single bit flip at index $j$.
For an input $x$, the output error is $\Delta y = |T'[x] - T[x]|$.
Since only the $j$-th bit is flipped, $\Delta y = 1$ if $x=j$, and $0$ otherwise.
Assuming inputs $x$ are uniform over $\{0, \dots, N-1\}$, the expected error is:
\begin{equation}
    E[|\Delta y|] = \sum_{x} P(x) \Delta y(x) = \frac{1}{N} \cdot 1 = \frac{1}{2^K}
\end{equation}
In contrast, a weight bit flip in a standard neuron affects the dot product for \textit{all} inputs where the corresponding input feature is non-zero. Thus, LUTs provide an exponential reduction in error probability per fault.
\end{proof}

\subsubsection{Symmetry and Recovery at High Corruption}

We observe a counter-intuitive regime in which accuracy can partially recover as the corruption rate approaches $p=1$. This phenomenon is tied to a structural tendency often observed in trained LUTs: complementary addresses $\overline{x}$ and $x$ frequently store opposite values. Intuitively, complementary binary patterns often correspond to semantically opposing configurations, so a LUT that activates on $x$ may tend to deactivate on $\overline{x}$, and vice versa.

We formalize this tendency through an anti-symmetry probability. For a LUT $T : \{0,1\}^K \to \{0,1\}$, define
\[
\alpha := P(T[\overline{x}] \neq T[x]),
\]
where the probability is taken over the input distribution on addresses $x$. The quantity $\alpha$ measures how often complementary addresses store opposite values. When $\alpha$ is large, the LUT is more anti-symmetric; when $\alpha=1$, the LUT is perfectly anti-symmetric.

\paragraph{Mechanism of Cancellation.}
Consider a fully corrupted state ($p=1$). For a LUT neuron with input address $x$:
\begin{enumerate}
    \item \textbf{Input Inversion:} the input address $x$ becomes its bitwise complement $\overline{x}$;
    \item \textbf{Content Inversion:} the stored LUT contents $T$ become their bitwise complement $\overline{T}$.
\end{enumerate}
The corrupted output is therefore $y' = \overline{T}[\overline{x}]$. Since $\overline{T}[k] = \neg T[k]$, this becomes $y' = \neg T[\overline{x}]$, while the clean output is $y = T[x]$. Hence, exact recovery occurs if and only if
\[
y' = y
\iff
\neg T[\overline{x}] = T[x]
\iff
T[\overline{x}] \neq T[x].
\]
Thus, recovery under full corruption is governed exactly by anti-symmetry across complementary addresses.

\begin{theorem}[Conditional Symmetric Recovery at Full Corruption]
Consider a LUT neuron under full corruption $p=1$. If $\alpha := P(T[\overline{x}] \neq T[x])$, then the probability of exact recovery is exactly
\[
P(y' = y) = \alpha.
\]
\end{theorem}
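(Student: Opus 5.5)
The plan is to make the ``Mechanism of Cancellation'' above rigorous and then take expectations over the input distribution. First I would fix the corruption model at $p=1$. Under the independent Bernoulli model every storage bit flips with probability one, so both the address bits and the table bits are inverted deterministically. The corrupted address is therefore exactly $\overline{x}$ and the corrupted table is exactly $\overline{T}$, with $\overline{T}[k]=\neg T[k]$ for every index $k$. The only remaining randomness is the clean input address $x$, drawn from the same distribution used to define $\alpha$.

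Next I would establish, pointwise for each fixed address $x$, the event identity
\[
\{y'=y\} = \{T[\overline{x}] \neq T[x]\}.
\]
This follows from $y'=\overline{T}[\overline{x}]=\neg T[\overline{x}]$ and $y=T[x]$, together with the fact that for bits $a,b\in\{0,1\}$ we have $\neg a=b$ if and only if $a\neq b$. Note that indexing $\overline{T}$ at $\overline{x}$ is legitimate: the complement map $x\mapsto\overline{x}$ is a bijection of $\{0,1\}^K$, so $\overline{x}$ is a valid address.

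Finally I would take probabilities over $x$. Since the two events coincide for every $x$, they have the same probability, so $P(y'=y)=P(T[\overline{x}]\neq T[x])=\alpha$ by definition of $\alpha$. No independence or uniformity assumption on $x$ is needed, because the identity holds pointwise.

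The only real subtlety is scope. The theorem concerns a single LUT whose input bits are themselves stored bits that get flipped, as in the BER definition of this subsection. I would state explicitly that the inputs are taken as fully inverted; that is, we are not asserting anything about how the upstream layers' outputs behave in a multi-layer network. With that interpretation fixed, the argument is a direct consequence of the deterministic inversion at $p=1$, and I expect no genuine obstacle.
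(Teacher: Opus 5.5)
Your proposal is correct and follows the paper's proof essentially line for line: at $p=1$ the address and the table are deterministically complemented, so $y'=\neg T[\overline{x}]$, the event $\{y'=y\}$ coincides pointwise with $\{T[\overline{x}]\neq T[x]\}$, and taking probability over the input distribution gives $\alpha$. Your two clarifications are accurate and are left implicit in the paper: no uniformity assumption on $x$ is needed, and the input bits are treated as flipped storage bits, following the subsection's BER definition.
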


\begin{proof}
Under full corruption, the address is inverted and the LUT contents are complemented, so $y' = \overline{T}[\overline{x}] = \neg T[\overline{x}]$. The clean output is $y = T[x]$. Therefore, $y' = y$ if and only if $T[\overline{x}] \neq T[x]$. Taking probability over the input distribution yields $P(y' = y) = P(T[\overline{x}] \neq T[x]) = \alpha$.
\end{proof}

This theorem is exact and conditional. The empirical or structural question is not whether recovery follows from anti-symmetry—it does—but whether trained logic/LUT networks tend to realize large values of $\alpha$. Our experiments suggest that they often do strongly enough for the recovery effect to become visible at the network level.

\paragraph{Depth-Dependent Recovery.}
The theorem above characterizes recovery for a single LUT neuron. We now extend the argument to deep compositions.

\begin{corollary}[Guaranteed Even-Layer Recovery in the Perfect Anti-Symmetric Case]
Consider a logic/LUT network of even depth $L$ under full corruption $p=1$. If every LUT along the active computation path satisfies $T[\overline{x}] \neq T[x]$ for all admissible addresses $x$ (equivalently, $\alpha=1$ at every layer), then the network recovers the clean computation exactly.
\end{corollary}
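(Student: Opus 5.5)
The plan is an induction on depth that tracks a single bit of state per layer: whether that layer's output vector equals the clean output or its bitwise complement. Fix the fault model as follows. At $p=1$ every stored parameter bit is flipped, so each table becomes $\overline{T}$. The network input $x^{(0)}$ is fed in clean. The fixed connectivity, the intermediate wires and the popcount head are not part of the parameter memory and are left untouched.

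\textbf{Step 1 (one layer).} Take a layer whose LUTs all satisfy $T[\overline{x}] \neq T[x]$ for every admissible address. This condition is symmetric in $x$ and $\overline{x}$, so it holds at both addresses a LUT can actually be queried with. Let $z^{(l-1)}$ denote the clean output of layer $l-1$. There are two cases for what the corrupted layer receives.
\begin{itemize}
\item If it receives $z^{(l-1)}$ itself, each LUT outputs $\overline{T}[x] = \neg T[x]$. This is the complement of the clean output.
\item If it receives $\overline{z^{(l-1)}}$, we first note that the fixed connectivity just selects and concatenates $K$ coordinates. Complementing the whole vector therefore complements each selected address, giving $\overline{x}$. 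Each LUT then outputs $\overline{T}[\overline{x}] = \neg T[\overline{x}] = T[x]$. This is exactly the mechanism of Theorem (Conditional Symmetric Recovery) with $\alpha=1$, so the output is the clean one.
\end{itemize}
In short, a corrupted anti-symmetric layer maps ``clean in'' to ``complemented out'' and ``complemented in'' to ``clean out''.

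\textbf{Step 2 (induction).} By Step 1, layer $1$ receives the clean input and outputs $\overline{z^{(1)}}$. Layer $2$ receives this complemented vector and restores $z^{(2)}$. Inducting on $l$, the corrupted output of layer $l$ equals $z^{(l)}$ when $l$ is even and $\overline{z^{(l)}}$ when $l$ is odd. For even $L$, the terminal LUT outputs therefore coincide bit for bit with the clean ones. The popcount head has no corrupted parameters, so the logits, and hence the predictions, are identical to the clean computation.

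\textbf{Main obstacle.} The delicate step is pinning down the fault model consistently with the paper's earlier ``Mechanism of Cancellation'', which flips both the inputs and the table of a single neuron. If intermediate activations were also treated as flipped storage at every layer, the parity argument would change. Each layer would then see the complement of its corrupted input, so the per-layer recovery would hold at every depth rather than only at even $L$. I would therefore state explicitly that only the LUT memories are corrupted and the network input enters clean, which is the reading under which even depth is the right condition. I would also remark that the same induction shows the result fails for odd $L$: there the output is globally complemented, which inverts every popcount vote.
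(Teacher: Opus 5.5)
Your proof is correct, but it takes a different route from the paper's. The paper applies the single-neuron theorem at every layer. Its LUT fault model (the Definitions paragraph counts each LUT's input bits $x$ as flipped storage, and the Mechanism of Cancellation inverts both address and table) means every corrupted LUT sees $\overline{x}$ and $\overline{T}$. So with $\alpha=1$ every layer reproduces its clean activation, and recovery ``propagates through the composition.'' That is a one-line proof reusing the theorem, but it never uses the evenness of $L$. Under that model the conclusion holds at every depth, as you point out in your last paragraph.

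Your parity induction works instead in the parameter-only model, which is the protocol the fault-injection experiments actually use. Odd layers emit the exact complement $\overline{z^{(l)}}$, and even layers undo it via $\neg T[\overline{x}] = T[x]$. The address inversion comes from your observation that fixed connectivity commutes with global complementation, which the paper leaves implicit. Your route gives three things the paper's does not:
\begin{enumerate}
\item The even-depth hypothesis does real work.
\item You get the companion statement that odd-depth networks output the global complement, matching the inversion seen in Figure~\ref{fig:dwn_layers}.
\item Your induction shows anti-symmetry is needed only at even-indexed layers, since a ``clean in'' layer outputs $\neg T[x]$ whatever $T$ is. This is consistent with the paper's cancellation-pair heuristic $\alpha^{L/2}$.
\end{enumerate}
The price is that you must fix the fault model explicitly, which you do.
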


\begin{proof}
By the theorem, when $\alpha=1$, each corrupted LUT reproduces its clean output exactly under $p=1$. Therefore, each layer outputs exactly its clean activation, and this recovery propagates through the entire composition. Hence the final network output is recovered exactly.
\end{proof}

This depth dependence is not exact in general, since recovery events across layers need not be independent. The following scaling should therefore be interpreted as a simplifying approximation that captures the expected weakening of recovery with depth.

When $\alpha < 1$, exact cancellation is no longer guaranteed at every layer. Instead, recovery at each layer or cancellation block succeeds only with probability $\alpha$, so the end-to-end recovery probability decreases multiplicatively with depth. Under a simplifying independence approximation across layers, a depth-$L$ even-layer network satisfies $P(\text{exact recovery}) \approx \alpha^L$. If recovery is analyzed at the level of cancellation pairs, this can equivalently be written as $P(\text{exact recovery}) \approx \alpha^{L/2}$, where $\alpha$ is then interpreted as the per-pair recovery probability. In either case, whenever $\alpha < 1$, the product of multiple factors smaller than one decreases monotonically with depth. Thus, deeper even-layer networks still exhibit the same recovery mechanism, but the strength of the recovery weakens as depth increases.

\begin{corollary}[Depth Weakens Even-Layer Recovery]
At full corruption ($p=1$), even-layer networks can exhibit recovery through the symmetric cancellation mechanism, but the strength of this recovery decreases with depth whenever $\alpha < 1$. Consequently, shallow even-depth networks recover more strongly than deeper even-depth networks.
\end{corollary}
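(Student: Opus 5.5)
The plan is to reduce the corollary to the single-neuron result (Conditional Symmetric Recovery at Full Corruption) and then use a multiplicative bound over layers. First I will fix notation. Let $R_l$ be the event that layer $l$ (or the $l$-th cancellation block) reproduces its clean output under $p=1$. Let $\alpha_l \le \alpha < 1$ be the corresponding recovery probability, taken over the clean input distribution reaching that layer. By the preceding theorem, conditional on the layer receiving a correctly recovered address, $P(R_l \mid R_1,\dots,R_{l-1}) = \alpha_l$. This holds because once layers $1,\dots,l-1$ are recovered, layer $l$ sees exactly the clean activations subjected to full inversion. It is therefore in precisely the situation covered by the theorem, with its address distribution equal to the clean one.

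Next I will apply the chain rule:
\[
P(\text{exact recovery}) \;\ge\; P(R_1\cap\cdots\cap R_L) \;=\; \prod_{l=1}^{L} P(R_l \mid R_1,\dots,R_{l-1}) \;=\; \prod_{l=1}^{L}\alpha_l .
\]
This is an equality if exact recovery is defined as path-wise recovery at every layer. Since each factor lies in $[0,1)$, the product is non-increasing in $L$, and it is strictly decreasing whenever the added factors are below one. Comparing two even depths $L<L'$ then gives $\prod_{l\le L'}\alpha_l \le \prod_{l\le L}\alpha_l$. Under the uniform simplification $\alpha_l=\alpha$, this recovers the paper's approximation $\alpha^L$ (or $\alpha^{L/2}$ for cancellation pairs), and strict monotone decay follows from $\alpha<1$. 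The same argument covers the perfect anti-symmetric case: taking all $\alpha_l=1$ gives probability one, consistent with the earlier corollary.

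The main obstacle is that recovery at layer $l$ can also happen \emph{without} recovery at earlier layers. Two wrong intermediate outputs might compose back to the correct final output, which is the pairwise ``cancellation'' reading. Path-wise recovery is then only a lower bound on output-level recovery, and monotonicity of the true output recovery probability is not automatic. I will handle this in two steps. First, I will state the corollary for path-wise (layer-by-layer) recovery, where the chain-rule argument is rigorous. Second, for the cancellation-pair interpretation, I will treat each pair of consecutive layers as a single block with per-block recovery probability $\beta$. I will then repeat the chain-rule argument over the $L/2$ blocks, which gives decay $\beta^{L/2}$ for $\beta<1$. I will note explicitly that, as the paper already states, this block-level claim relies on the independence (or conditional-identity) approximation rather than being exact.
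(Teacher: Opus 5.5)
Your route is the paper's own: model full-corruption recovery as a product of per-layer (or per-pair) recovery probabilities, and note that a product of factors below one shrinks with depth. The gap is the one step where you claim more than the paper, namely the exact identity $P(R_l\mid R_1,\dots,R_{l-1})=\alpha_l$ for path-wise recovery. Given $R_1\cap\cdots\cap R_{l-1}$, layer $l$ recovers iff $T_l[\overline{a_l}]\neq T_l[a_l]$ for its clean address $a_l$. So the conditional probability is the anti-symmetry rate of $T_l$ under the clean address distribution \emph{restricted to inputs on which the earlier layers recovered}. Both $a_l$ and the event $R_1\cap\cdots\cap R_{l-1}$ are deterministic functions of the same network input. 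Hence this is in general not the unconditional rate $\alpha_l$ that appears in the theorem. The two agree only if $\{T_l[\overline{a_l}]\neq T_l[a_l]\}$ is independent of earlier recovery, which is exactly the independence approximation the paper states explicitly. Your sentence that the address distribution is ``equal to the clean one'' is where this assumption enters without being announced.

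This step carries the entire depth claim. Suppose that at every layer the non-anti-symmetric addresses are reached from one common set of network inputs of mass $1-\alpha$. Then $\alpha_l=\alpha<1$ for all $l$, but $P(R_l\mid R_1,\dots,R_{l-1})=1$ for $l\ge 2$, so path-wise recovery equals $\alpha$ at every depth and does not decay. Now compare a depth-$2$ network whose two failure sets are disjoint: it recovers with probability $2\alpha-1<\alpha$ (for $\alpha\ge 1/2$). So across separately trained networks, which is what the corollary compares, depth need not weaken recovery even non-strictly. The only exact fact is the inclusion $R_1\cap\cdots\cap R_{L'}\subseteq R_1\cap\cdots\cap R_L$. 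It compares a fixed network with its own prefixes and gives no strict decrease.

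Note also the setting in which even-layer recovery is actually observed, parameter-only corruption. There layer $1$ outputs $\neg T_1[x]$ deterministically, so $P(R_1)=0$. The even-layer effect therefore lives entirely in your block version, which you already concede is approximate.

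To reach the corollary, you must either adopt the independence approximation for the path-wise version as well, or directly posit that every conditional recovery rate equals a common $\alpha<1$ in every network being compared. Stated that way, your argument becomes the paper's heuristic rather than a rigorous refinement of it.
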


\section{Experimental Evaluation}
\label{sec:experiments}

To validate the theoretical analysis derived in Section 4 and confirm the hypothesis that resilience is a structural property converging towards logic-based architectures, we conducted extensive fault injection simulations. The experiments were designed to isolate specific architectural hyperparameters---numerical precision, width, depth, activation functions, and sparsity---to quantify their individual contributions to bit-flip resilience.

\subsection{Experimental Setup}

\textbf{Datasets and Benchmarks:} We evaluated resilience across the \textbf{MLPerf Tiny} benchmark suite \citep{banbury2021mlperf}, which represents the standard for resource-constrained edge inference. The tasks include Image Classification (MNIST \citep{deng2012mnist}, Fashion-MNIST \citep{xiao2017fashion}), Keyword Spotting (Google Speech Commands), and Anomaly Detection (ToyAdmos).

\textbf{Models:} For the architectural ablations, whose goal is to isolate the effect of individual hyperparameters, we use Multi-Layer Perceptrons (MLPs). Unless otherwise stated, the ablation baseline is an FP32 one-hidden-layer MLP with 256 hidden neurons and ReLU activation, and each ablation varies only the factor under study: precision, width, depth, activation, or sparsity. This choice is primarily driven by computational cost: our evaluation requires dense sweeps over corruption rates together with Monte Carlo fault injection (approximately 100 trials per point), and repeating the full study with larger CNN baselines across every ablation would be prohibitively expensive. The MLP setting is sufficient to expose the structural scaling trends of precision, width, depth, activation, and sparsity in a controlled and computationally tractable form. For the final Logic/LUT comparison, specifically against the Differentiable Weightless Network (DWN) \citep{dwn} under high-corruption settings, we additionally report results against the standard MLPerf Tiny CNN baselines to show that the same resilience trends observed in the controlled MLP study also appear in representative edge architectures.

\textbf{Fault Injection Protocol:} We model faults as independent Bernoulli trials where each bit in the binary representation of the network parameters (weights, LUTs) flips with probability $p$. We evaluate a comprehensive spectrum of Bit Error Rates (BER), from "benign" hardware faults ($p \in [10^{-8}, 10^{-3}]$) to "catastrophic" failure modes ($p \in [0.01, 1.0]$).

\subsection{The Hierarchy of Precision}
\label{sec:exp_precision}

We first validate the impact of numerical precision on resilience. Figure \ref{fig:precision} illustrates the degradation of accuracy as the bit error rate increases across FP32, FP16, FP8, INT8, INT4, INT2, and BNN formats. For clarity, all INT results reported here use \emph{layer-wise affine quantization}, not pure integer quantization. Pure integer quantization was not included because, while it is theoretically more resilient to bit flips, it did not provide sufficient baseline task accuracy in our experiments. Thus, the INT curves shown here reflect the practically relevant quantized setting used in deployment.

\begin{figure}[h]
    \centering
    \includegraphics[width=0.9\linewidth]{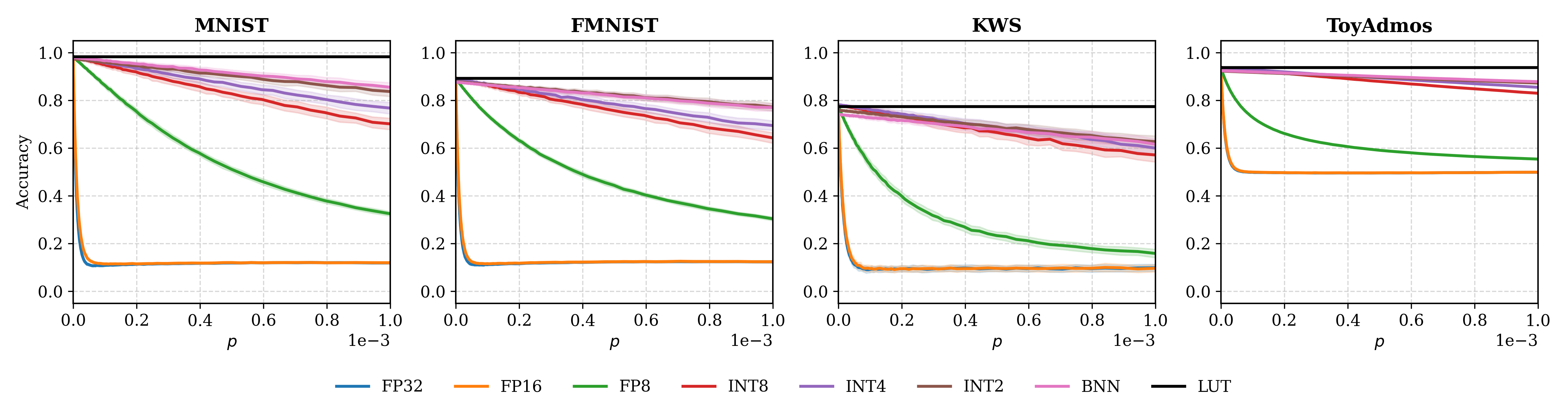}
    \caption{\textbf{Numerical Precision Ablation.} Accuracy vs. Bit Error Rate ($p$) averaged across datasets. A clear hierarchy emerges: floating-point models collapse early ($p \approx 10^{-5}$) due to exponent errors, while integer and binary models sustain accuracy significantly longer.}
    \label{fig:precision}
\end{figure}

The results confirm a strict hierarchy where \textbf{lower precision yields higher resilience}:
\begin{itemize}
    \item \textbf{Floating Point (FP32/16):} These formats exhibit a "cliff-edge" failure mode, dropping to random accuracy at relatively low error rates ($p \approx 10^{-5}$). This confirms Theorem 2, where exponent bit flips cause unbounded multiplicative errors.
    \item \textbf{Binary (BNN):} 1-bit weights proved significantly more robust, maintaining usable accuracy up to $p \approx 10^{-3}$. By eliminating the exponent and magnitude bits entirely, BNNs bound the maximum error per weight.
    \item \textbf{Logic/LUT:} Most notably, when comparing the weighted baselines to LUT-based models (DWNs), the gap is substantial. In the $10^{-3}$ regime, where FP32 and FP16 models drop roughly 62\% accuracy on average, LUT-based models show  $\approx 0$ 
    degradation. 
    Higher fault rates (up to 40\%) are discussed in section 5.4.2.
\end{itemize}

These results also illustrate the accuracy--resilience trade-off induced by representation choice. Models with higher clean accuracy at $p=0$ can degrade much faster under memory corruption, while lower-precision and logic/LUT-based models may sacrifice a small amount of clean accuracy but preserve performance over a wider BER range. This trade-off is visible, for example, in KWS, where FP32 begins with slightly higher clean accuracy than BNN but collapses much earlier as the corruption rate increases.

\subsection{Architectural Ablations}

Having established that minimal precision (and, in the binary limit, 1-bit) is strongly favored under the corruption model, we analyze the remaining structural components---Width, Depth, Activation, and Sparsity---to demonstrate that resilience improves as these parameters approach the configuration of a Logic/LUT network.

\subsubsection{Width}
We varied the width of the hidden layers in the MLP baseline from $W=64$ to $W=1024$ while keeping depth constant.

\begin{figure}[h]
    \centering
    \includegraphics[width=0.9\linewidth]{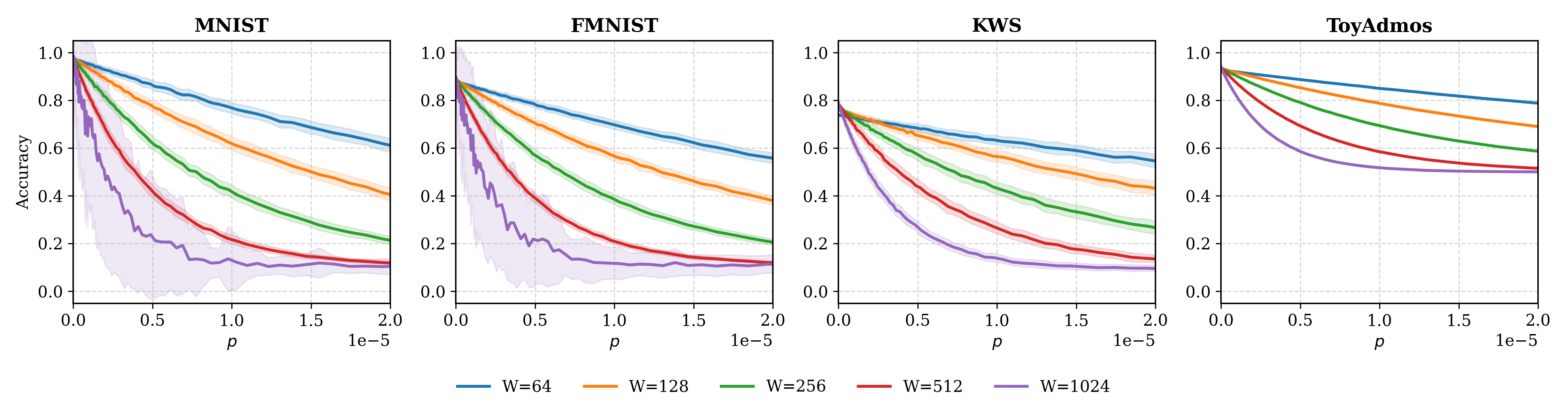}
    \caption{\textbf{Width Ablation.} Wider networks (e.g., $W=1024$) degrade faster than narrower ones, validating the $O(n^2)$ bias accumulation hypothesis.}
    \label{fig:width_plots}
\end{figure}

As shown in Figure \ref{fig:width_plots}, wider networks degrade significantly faster than narrower ones. For example, at $p=10^{-5}$ on MNIST, the $W=1024$ model suffers a nearly 40\% accuracy drop, whereas the $W=64$ model remains robust. This empirically validates the bias accumulation term derived in Equation 18 ($Bias^2 \propto n^2$). Resilience favors narrow fan-in, which logic-based networks enforce by design (typically $N \le 6$).

\subsubsection{Depth}
We analyzed error propagation in networks of increasing depth ($L=1$ to $6$).

\begin{figure}[h]
    \centering
    \includegraphics[width=0.9\linewidth]{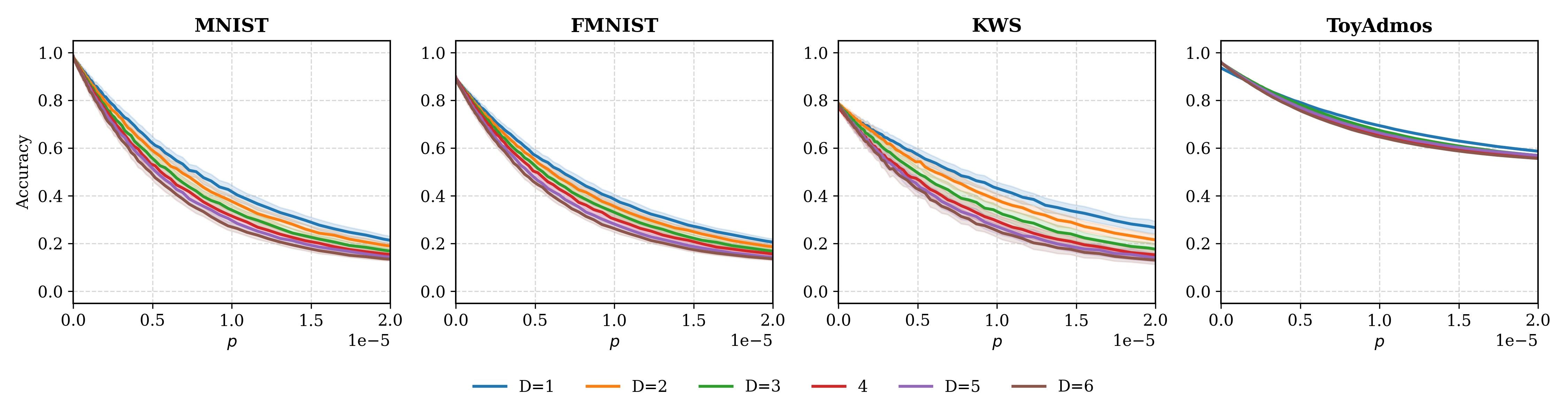}
    \caption{\textbf{Depth Ablation.} Deeper networks suffer from error avalanches, confirming the multiplicative error propagation in standard architectures.}
    \label{fig:depth_plots}
\end{figure}

We observe a consistent decrease in robustness as depth increases (Figure \ref{fig:depth_plots}). The "error avalanche" effect is visible, where noise introduced in early layers is amplified by subsequent transformations. This confirms that shallow networks are more robust, aligning with DWN architectures which typically utilize shallow lookup layers.

\subsubsection{Activation Function}
We compared the resilience of unbounded activations (ReLU) against bounded activations (Sigmoid, Tanh) and hard-saturation functions (Sign/Step).

\begin{figure}[h]
    \centering
    \includegraphics[width=0.9\linewidth]{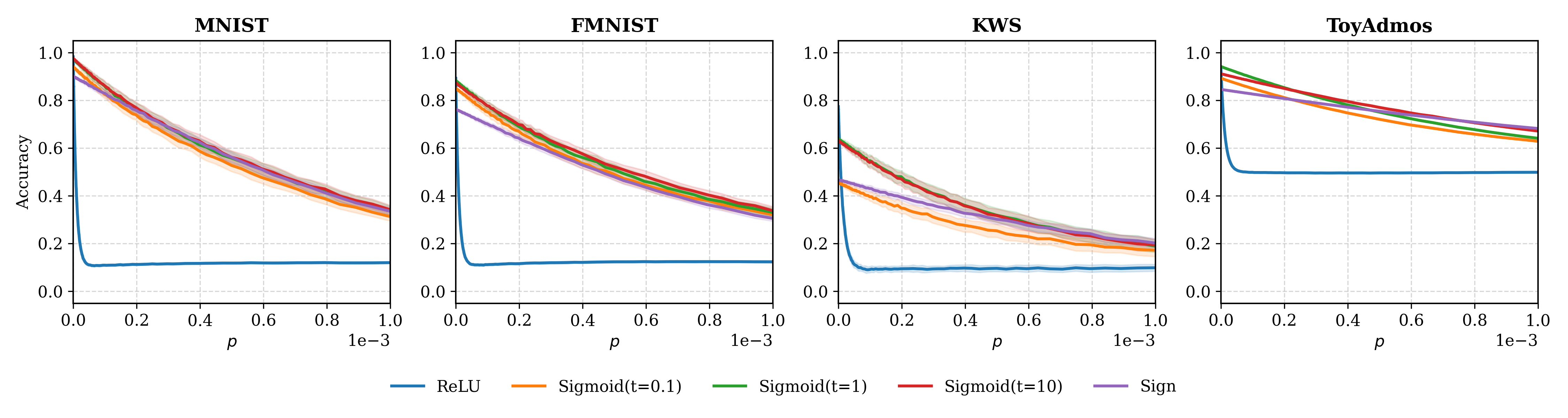}
    \caption{\textbf{Activation Function Ablation.} Hard saturation functions (Sign, low-temp Sigmoid) provide superior error masking compared to unbounded ReLU.}
    \label{fig:activation_plots}
\end{figure}

Figure \ref{fig:activation_plots} provides strong evidence for the hard saturation hypothesis. ReLU (unbounded) fails earliest. Conversely, Sign/Step functions offer the highest resilience. By forcing saturation, these functions mask small-to-moderate perturbations entirely and convert large perturbations into binary bit-flips. The most resilient activation is the Step function—the native behavior of a LUT.

\subsubsection{Sparsity}
We evaluated unstructured sparsity levels ranging from 0\% (dense) to 99\% (highly sparse) using an "Iso-Model Size" comparison to ensure fair bit-budgeting.

\begin{figure}[h]
    \centering
    \includegraphics[width=0.9\linewidth]{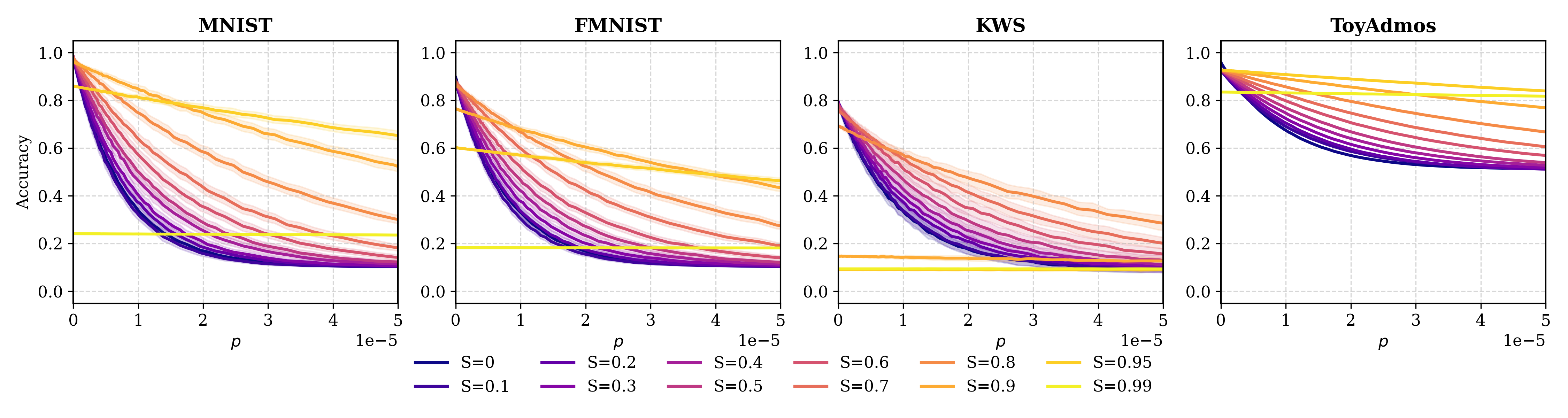}
    \caption{\textbf{Sparsity Ablation.} Extreme sparsity ($>90\%$) acts as a dominant resilience factor, significantly delaying accuracy collapse.}
    \label{fig:sparsity_plots}
\end{figure}

Figure \ref{fig:sparsity_plots} demonstrates that sparsity is a dominant factor in resilience. Even after accounting for the storage overhead, highly sparse models ($S \geq 90\%$) consistently outperform dense models. This aligns with Eq. 19, where the total error variance scales linearly with the number of active elements. Logic-based networks are intrinsically sparse, maximizing this benefit.

\subsection{Logic/LUT Resiliency}

We next compare the Differentiable Weightless Network (DWN) against the weighted baselines. Relative to the preceding ablations, Logic/LUT-based models combine low-precision discrete parameters, bounded Boolean outputs, localized connectivity, and shallow compositions. In addition, as shown in the LUT analysis, a bit flip in a table entry produces a bounded error that affects only the corresponding addressed outputs, rather than an arithmetic perturbation that propagates through a continuous weight value.

\subsubsection{Small-Range Corruption Immunity}
In the standard operating regime of hardware faults ($p \leq 10^{-4}$), the performance gap between arithmetic and logic-based models is significant. As shown in Figure \ref{fig:precision}, the Logic/LUT-based models remain essentially unchanged in this regime, whereas the weighted baselines begin to degrade. Across our Monte Carlo trials, we observe no measurable average accuracy drop for the DWN models at these corruption rates.

\subsubsection{Catastrophic Fault Regime ($p = 0.01$ to $0.4$)}
The structural superiority of weightless logic becomes most apparent when subjected to catastrophic bit error rates that could result from faults or bit-flip attacks. As illustrated in Figure~\ref{fig:big_precision}, the FP32, FP16, FP8, and BNN results in this comparison use the standard MLPerf Tiny CNN baselines, while the Logic/LUT curve uses the DWN model. The standard numerical CNN baselines suffer a total collapse to random-model accuracy at the first injection step of $p=0.01$, with the BNN baseline following shortly after.

In contrast, the Logic/LUT-based architecture maintains functional utility far beyond its weighted counterparts:
\begin{itemize}
    \item \textbf{At 0.1 corruption:} While weighted models have already reached random-guess levels (yielding approximately -65\% drop), LUT-based NNs exhibit a marginal 2\% drop in accuracy.
    \item \textbf{At 0.2 corruption:} Even at this severe fault rate, where BNNs have completely failed, Logic/LUT models sustain high performance with only an 8\% drop.
    \item \textbf{The 0.5 Threshold:} The weightless model only reaches the mathematical limit of random guessing at 50\% corruption. This is the theoretical point where a model is statistically indistinguishable from a randomly initialized one, yet the DWN remains the only architecture capable of approaching this limit gracefully.
\end{itemize}

These results are consistent with the theoretical picture developed in Section~4: shifting from continuous weight-based multiplication to discrete Boolean lookups replaces rare, potentially large multiplicative distortions with a bounded address-localized failure mechanism.

    

\begin{figure}[h]
    \centering
    \includegraphics[width=0.9\linewidth]{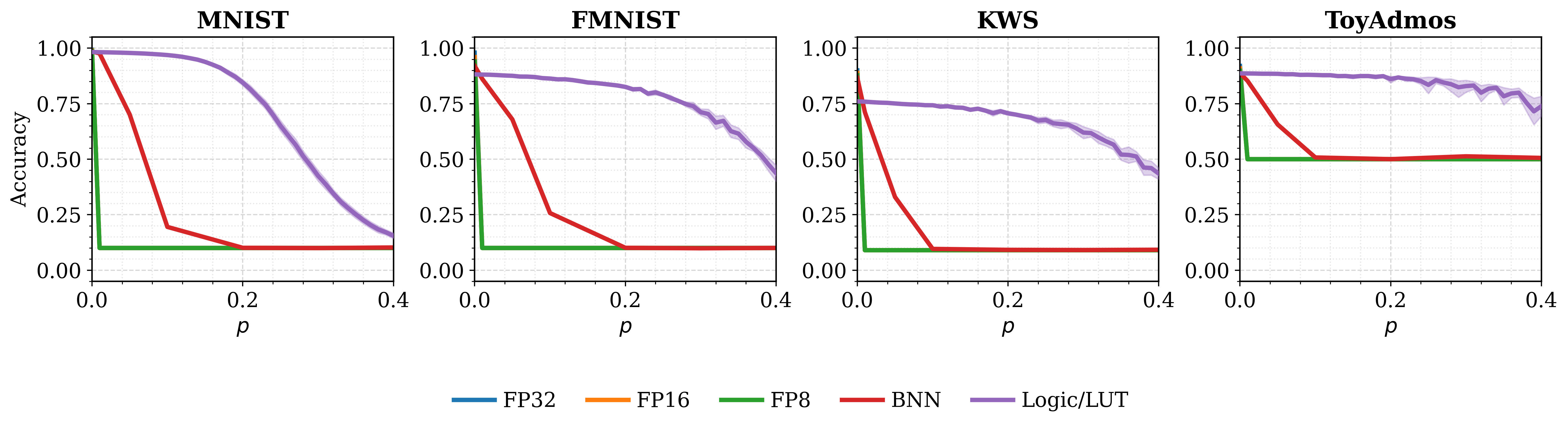}
    \caption{Parameter bit-flip resilience on MNIST, FashionMNIST, KWS and ToyAdmos datasets with corruption rates ranging from 0 to 40\%. FP32 and FP16 lines are overlapped with FP8.}
    \label{fig:big_precision}
\end{figure}


\subsection{Symmetric Recovery in Logic/LUT Networks}

Finally, we investigate the unique "Symmetric Recovery" phenomenon predicted for Logic-Based architectures. We trained DWN models of varying depths and subjected them to the full range of bit error rates up to $p=1.0$.

\begin{figure}[h]
    \centering
    \includegraphics[width=0.6\linewidth]{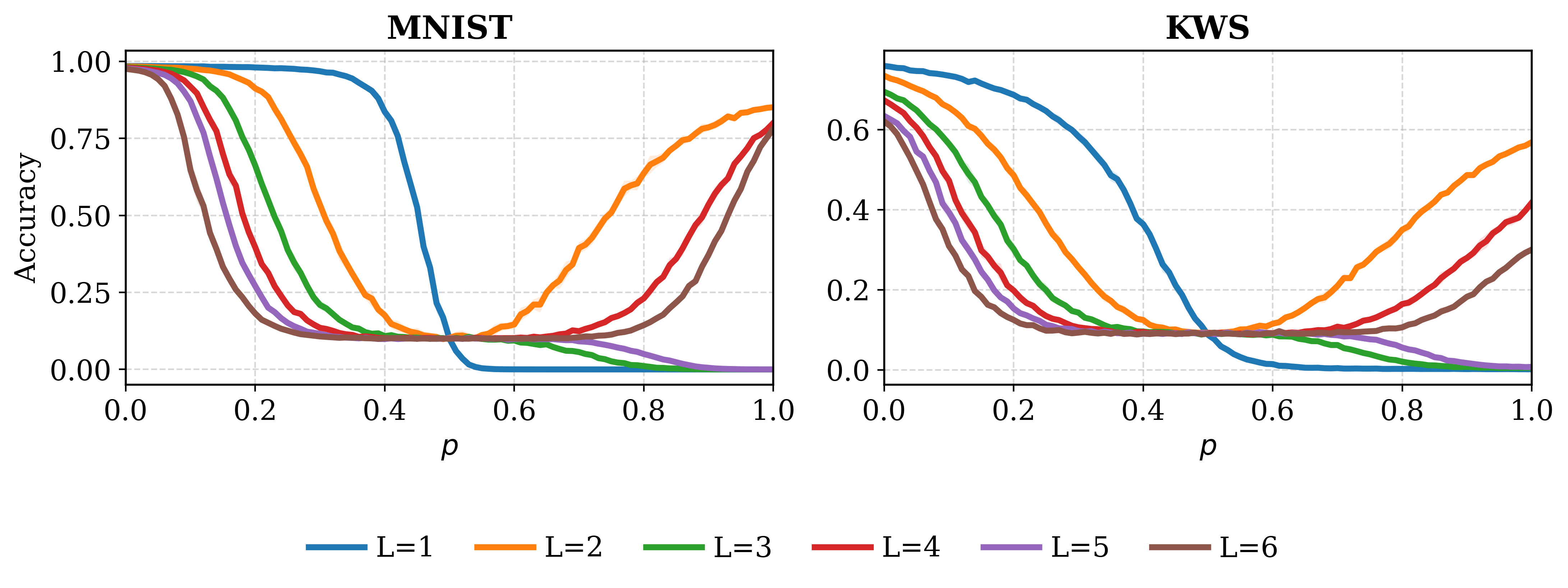}
    \caption{\textbf{DWN Layer Parity Recovery.} for corruption rates $p$ ranging from $0$ to $100\%$ and network depths $L=1$ to $6$. As $p \to 1$, even-depth networks exhibit partial recovery consistent with symmetric error cancellation across layers, whereas odd-depth networks tend toward inverted outputs.}
    \label{fig:dwn_layers}
\end{figure}

Figure \ref{fig:dwn_layers} reveals a striking behavior as the corruption rate approaches $1.0$. Networks with an even number of layers ($L=2,4,6$) recover substantially from the random-guess regime, consistent with the cancellation mechanism $y'=\neg T[\neg x]$. In contrast, odd-depth networks tend toward inversion rather than recovery. We emphasize that the exact identity is algebraic, while the magnitude of the observed recovery depends on how strongly the trained LUTs exhibit anti-symmetry across complementary addresses.

\section{Conclusion}

In this work, we studied resilience to parameter bit flips as a structural property of neural computation. Through neuron- and layer-level analysis, we derived expected error expressions for multiple numerical formats and architectural primitives, and used them to identify a consistent set of resilience-favoring trends: lower precision, higher sparsity, bounded activations, and shallower compositions. We then argued that Logic and Lookup-Based Neural Networks instantiate an extreme point in this design space.

Our experiments on the MLPerf Tiny benchmark suite are consistent with this picture. In particular, Logic/LUT-based architectures such as Differentiable Weightless Networks (DWNs) remain highly stable in corruption regimes where conventional floating-point models degrade sharply. In addition, the discrete structure of LUT-based networks gives rise to a distinctive even-layer recovery effect at extreme corruption rates, which we analyzed through the interaction between address inversion and table inversion.

Taken together, these results suggest that moving from continuous arithmetic weights to discrete Boolean lookup mechanisms can offer an attractive accuracy--resilience trade-off for fault-tolerant edge inference. More broadly, the paper points toward a design principle for reliable neural systems: resilience can be shaped directly at the representational and architectural level, rather than treated only as a property of training or hardware protection. Future work includes extending these principles to larger architectures, tightening the theory of recovery effects in trained LUTs, and evaluating structured hardware fault models beyond independent bit flips.

\section*{Acknowledgements}
This research was supported by Semiconductor Research Corporation (SRC) Task 3148.001, National Science Foundation (NSF) Grants \#2326894, \#2425655 (supported in part by the federal agency and Intel, Micron, Samsung, and Ericsson through the FuSe2 program), NVIDIA Applied Research Accelerator Program, and compute resources on the Vista GPU cluster through CGAI \& TACC at UT Austin. Any opinions, findings, conclusions, or recommendations are those of the
authors and not of the funding agencies.

\bibliography{main}
\bibliographystyle{tmlr}

\appendix

\section{Training Runtime and Model Size}
\label{app:runtime_size}

Table~\ref{tab:runtime_model_size} reports average training runtime and storage size for the main model variants on the A100 NVIDIA GPU. The model-size column counts the bits required to store the parameters used during inference; for affine quantized models, this includes the integer and the stored scale/zero-point metadata.

\begin{table}[h]
\centering
\small
\caption{Training runtime and model size for different model variants.}
\label{tab:runtime_model_size}
\begin{tabular}{lccccc}
\toprule
\textbf{Model} & \textbf{Avg. Training time} & \textbf{MNIST} & \textbf{FMNIST} & \textbf{KWS} & \textbf{ToyAdmos} \\
\midrule
FP32      & 8 min  & 795 KiB  & 795 KiB  & 14 MiB  & 14 MiB \\
FP16      & 8 min  & 397 KiB  & 397 KiB  & 7 MiB   & 7 MiB \\
FP8       & 8 min  & 199 KiB  & 199 KiB  & 3 MiB   & 3 MiB \\
INT8      & 8 min  & 199 KiB  & 199 KiB  & 3 MiB   & 3 MiB \\
INT4      & 8 min  & 99 KiB   & 99 KiB   & 1.8 MiB & 1.8 MiB \\
INT2      & 8 min  & 50 KiB   & 50 KiB   & 0.9 MiB & 0.9 MiB \\
BNN       & 10 min & 25 KiB   & 25 KiB   & 0.4 MiB & 0.4 MiB \\
Logic/LUT & 10 min & 6 KiB    & 8 KiB    & 0.003 MiB & 0.003 MiB \\
\bottomrule
\end{tabular}
\end{table}

\end{document}